\documentclass{article} 
\PassOptionsToPackage{table}{xcolor}
\usepackage{iclr2027_conference,times}

\IfFileExists{math_commands.tex}{

\usepackage{amsmath,amsfonts,bm}

\def\eqref#1{equation~\ref{#1}}

\def\1{\bm{1}}

\DeclareMathAlphabet{\mathsfit}{\encodingdefault}{\sfdefault}{m}{sl}
\SetMathAlphabet{\mathsfit}{bold}{\encodingdefault}{\sfdefault}{bx}{n}

}{}

\usepackage{hyperref}
\usepackage{url}
\usepackage{booktabs}
\usepackage{graphicx}
\usepackage{multirow}
\usepackage{amsmath,amssymb,amsthm}
\usepackage{enumitem}
\usepackage{xcolor}
\definecolor{rowhl}{gray}{0.92}
\newif\ifhastcb
\IfFileExists{tcolorbox.sty}{\hastcbtrue}{\hastcbfalse}
\ifhastcb
  \usepackage{tcolorbox}
  \tcbuselibrary{listings,skins}
  \newtcolorbox{promptbox}[1][]{
    enhanced, colback=gray!5, colframe=gray!60,
    fonttitle=\bfseries\small, title=#1,
    boxrule=0.5pt, arc=2pt, left=6pt, right=6pt, top=4pt, bottom=4pt,
    fontupper=\small\ttfamily
  }
\else
  \definecolor{pbback}{gray}{0.975}  
  \definecolor{pbframe}{gray}{0.7}   
  \newenvironment{promptbox}[1][]{%
    \par\medskip\noindent\begingroup
    \def\pbtitle{#1}%
    \setbox0\vbox\bgroup
      \hsize=\dimexpr\linewidth-13pt\relax
      \vskip4pt
      \ifx\pbtitle\empty\else{\bfseries\small\pbtitle\par\vskip2pt}\fi
      \small\ttfamily\raggedright
  }{%
    \par\vskip4pt
    \egroup
    \fboxrule=0.5pt \fboxsep=0pt
    \fcolorbox{pbframe}{pbback}{\hskip6pt\box0\hskip6pt}%
    \endgroup\par\medskip
  }
\fi
\usepackage{float}
\usepackage{wrapfig}
\makeatletter
\renewcommand\paragraph{\@startsection{paragraph}{4}{\z@}{0.5ex plus 0.2ex minus 0.1ex}{-1em}{\normalfont\normalsize\bfseries}}
\makeatother

\newtheorem{definition}{Definition}
\newtheorem{proposition}{Proposition}
\newtheorem{lemma}{Lemma}
\newtheorem{assumption}{Assumption}

\title{SynAct: Synergistic Active Feedback Learning for LLM Agents}

\author{Ning Liu, Ian Li, Kristina Klinkner\\
Amazon \\
\texttt{\{ningliun,ianliyh,klinkner\}@amazon.com}
}

\iclrpreprintcopy
\begin{document}

\maketitle

\begin{abstract}
LLM agents executing multi-step workflows fail in two fundamentally different ways: recoverable slips, which further attempts can repair, and knowledge gaps, which retrying cannot resolve. Expert feedback can address the latter, but expert attention is scarce. The key challenge is therefore not whether to ask for feedback, but \emph{where feedback can change the outcome}. Existing approaches either avoid expert feedback, allowing agents to reinforce their own errors, or allocate feedback at the task level without identifying consequential decisions. We present \textsc{SynAct}, an active learning framework that treats expert attention as the binding constraint and allocates feedback at the level of individual decisions. SynAct ranks decisions by the mutual information between the selected action and the terminal outcome across repeated rollouts, queries an expert at the highest-value decisions, and distills the resulting corrections into transferable rules that the agent can reuse without expert access at deployment. We theoretically characterize what this signal identifies under finite repeated sampling: with $k$ rollouts and at least $m$ observations per action, outcome-constant decisions have zero mutual information, while outcome-separating decisions have mutual information at least $H_2(m/k)$ bits, yielding exact separation for any threshold $t\in(0,H_2(m/k))$. This also explains why success-based acquisition can be blind when all rollouts fail: success is constant even when failed runs reach different outcomes. Across four agent benchmarks and four models spanning three families, \textsc{SynAct} improves held-out accuracy in all 16 evaluated settings, by $+11.7$pp on average. The same signal transfers to inference-time routing, where \textsc{SynAct} outperforms budget-matched cascade, uncertainty, and expert-judge routers in 15 of 16 settings, while a single targeted correction outperforms seven additional unaimed retries.
\end{abstract}

\section{Introduction}
\label{sec:intro}

An LLM agent can fail a multi-step task for two fundamentally different reasons: it can \emph{slip} despite knowing how to succeed, or it can hit a \emph{knowledge gap} that no amount of retrying will close. Additional test-time compute, through self-reflection~\citep{shinn2023reflexion,madaan2023selfrefine} or trajectory search~\citep{yao2023tot,zhou2024lats}, can often recover the first but not the second: an agent checking its own work has little basis for distinguishing a correct interpretation from a confidently wrong one~\citep{huang2024selfcorrect}. An external expert can resolve such gaps, but expert attention is scarce. Agents that sidestep the expense by distilling context from their own traces~\citep{zhang2026ace,ouyang2026reasoningbank} return judgment to the same self-assessment that fails on knowledge gaps. In real agentic systems, the missing knowledge often resides in domain-specific procedures and human expertise~\citep{liu2026eluna}, making expert attention the one resource in the loop that does not scale. This creates a fundamental allocation problem: \emph{when expert feedback is limited, where should it be spent?} A failing trajectory is not an atomic example: a 17-step episode may contain one or two decisions that determine its outcome and many that do not. Spending expert attention on the entire trajectory therefore wastes effort on irrelevant decisions. The right unit of feedback is not the trajectory, but the \emph{decision point}.

We cast expert feedback as active learning over decision points. Rather than asking which trajectories are wrong, SynAct asks a more targeted question: \emph{which action differences explain where the agent ends up?} We sample repeated rollouts of a task, group actions taken from the same environment state, and ask whether different actions lead to different terminal outcomes (Figure~\ref{fig:method}). 
We formalize this with the mutual information $I(A;O)$ between the action $A$ taken at a decision point and the run's terminal outcome class $O$. Both are observable from the agent's own rollouts. Acquisition therefore requires neither a gold answer nor a success label. This gives the signal two properties that are particularly important for feedback allocation. First, it measures \emph{consequential} disagreement rather than behavioral uncertainty: action entropy is high whenever runs disagree, whereas $I(A;O)$ is high when those action differences explain different outcomes. Second, it remains informative when every rollout fails. Different actions can still lead to different wrong end states, whereas a success-based signal collapses to zero when success is constant. SynAct can therefore surface decisions that matter precisely on tasks that are beyond the agent's current capability.

\begin{figure}[t]
\centering
\includegraphics[width=1.0\linewidth]{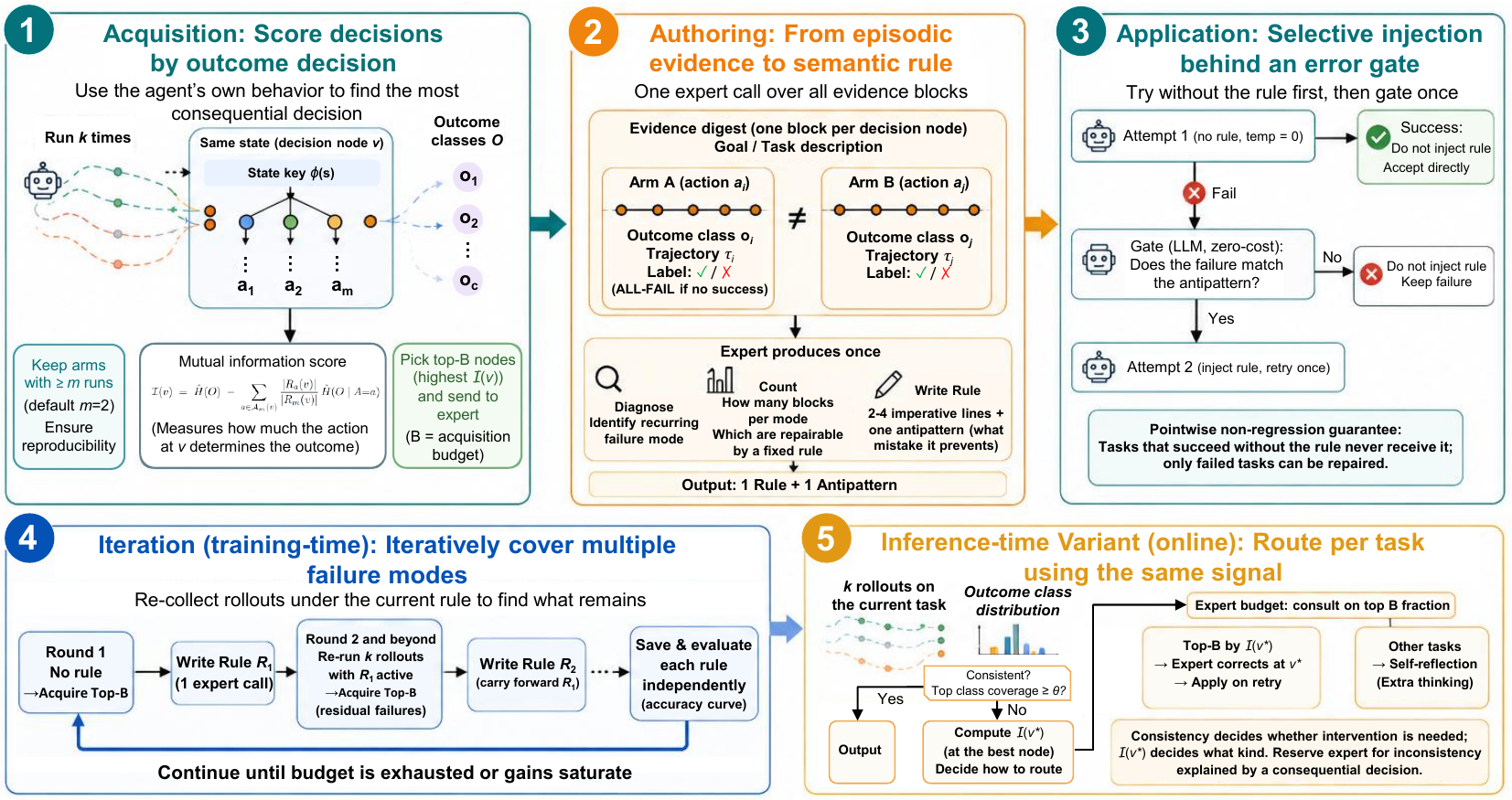}
\vspace{-5pt}
\caption{\textbf{SynAct allocates a fixed expert budget over decision points rather than episodes.} Acquisition scores each state-keyed decision node by $I(A;O)$, computed from the run itself with no gold answer or success check. Authoring sends the top-scoring nodes to the expert in one call, which abstracts them into a convention plus an antipattern. Application injects the convention only into failed attempts exhibiting that antipattern, iteration re-rolls to surface residual failures, and the same score routes consultation per task at inference time.}
\label{fig:method}
\end{figure}

This raises a second question: once a consequential decision is found, how should its correction be turned into reusable knowledge? SynAct sends the selected decision points to the expert together and asks for a compact rule: a \emph{convention} describing what to do and an \emph{antipattern} describing the failure it prevents. At deployment, an error gate applies the rule only when a failed trajectory exhibits that antipattern. This separates three decisions that are often conflated: acquisition determines \emph{where} expert attention is spent, authoring determines \emph{what} is retained, and the gate determines \emph{when} it is applied. The resulting rules outlive the trajectories that produced them, while the agent remains frozen and the expert stays out of the execution loop. We evaluate SynAct on four agent benchmarks spanning text-to-SQL, compositional crafting, shell scripting, and tool-use dialogue, using four agent models across three model families.
Our contributions are:
\begin{itemize}
\itemsep0.15em
\item We introduce \emph{budgeted active feedback acquisition over decision points} and a label-free signal, $I(A;O)$, that ranks decisions by how strongly their action choices explain downstream outcomes. It remains informative when all rollouts fail and outperforms random, entropy, and success-based selection at matched budgets.
\item We characterize the \emph{finite-sample identifiability} of this signal. With $k$ repeated rollouts and a minimum of $m$ observations per action, outcome-constant and outcome-separating decisions are exactly distinguished with any threshold $t < H_2(m/k)$, while between these extremes the score measures how much behavioral variation the outcome resolves. The same analysis proves success-based acquisition blind when all rollouts fail.
\item We make expert feedback \emph{non-regressive} through selective application: an error gate withholds a learned rule from already-solved attempts and activates it only when its antipattern is present, eliminating win-to-lose flips even when always-on context costs a weak agent up to $17.6$pp.
\item Across $4$ benchmarks $\times$ $4$ agent models with an equal expert budget, SynAct improves accuracy in all $16$ settings and leads four experience-distillation baselines in $15$. The same signal transfers to inference-time routing, where it beats budget-matched cascade, uncertainty, and expert-judge routers in $15$ of $16$ settings.
\end{itemize}

\section{Related Work}
\label{sec:related}

\paragraph{Self-improving agents.}
A large body of work builds reusable context from an agent's own traces, including playbooks~\citep{zhang2026ace}, reasoning and causal memories~\citep{ouyang2026reasoningbank,majumder2024clin,zhao2024expel}, induced workflows and tools~\citep{wang2025awm,kujanpaa2026tool}, and self-written guidelines and manuals~\citep{fu2024autoguide,chen2024automanual}. These form our primary baselines, and they differ from SynAct on two axes. Their supervision reduces to the agent's own judgment of its traces, which is least reliable where the agent is confidently wrong~\citep{huang2024selfcorrect}, and several bootstrap only from successes~\citep{wang2025awm,zhao2025sirius}, leaving no signal on tasks the agent fails throughout. They also inject context unconditionally, a regime known to introduce regressions~\citep{shao2026misevolve}. Reflective prompt search~\citep{agrawal2026gepa} is complementary, optimizing prompt space rather than allocating supervision. SynAct instead spends a small metered expert budget and gates what it learns.

\paragraph{Test-time correction and search.}
Self-generated feedback~\citep{shinn2023reflexion,madaan2023selfrefine}, search over trajectories~\citep{yao2023tot,zhou2024lats}, and compute-optimal allocation~\citep{snell2025scaling} scale the agent's own computation, which can recover slips yet cannot supply knowledge the agent lacks~\citep{huang2024selfcorrect}. Relevant to our mechanism, \citet{chen2026tta} distinguish syntactic misunderstanding of an environment's conventions from semantic misunderstanding of the task, and the former is what an authored convention repairs. Our premise is that one class of failures resists additional agent compute, and the contribution lies in where to spend a different resource.

\paragraph{Budgeted acquisition and expert intervention.}
We inherit from pool-based active learning~\citep{settles2009active}, where query-by-committee measures disagreement across hypotheses~\citep{seung1992qbc} and information-theoretic criteria select maximally informative queries~\citep{houlsby2011bald}. Our $k$ rollouts act as such a committee, but the unit of acquisition is a decision inside an episode rather than an instance, and the learner updates context rather than weights. On the intervention side, DAgger queries an expert at visited states~\citep{ross2011dagger}, preference learning extracts training signal from scarce human judgments~\citep{christiano2017preferences,liu2026beyond}, and targeted feedback recovers full-annotation quality from a fraction of labels~\citep{xu2025rlthf}, and all three economize how much supervision is needed while leaving open where within an episode it is most informative. LEAP is the closest prior work, teaching an agent from a privileged AI expert~\citep{choudhury2025leap}, though it consults the teacher per episode and fine-tunes, whereas we make one expert call per iteration and leave the model frozen. \citet{deng2026clarification} come nearest to our score, timing a clarification request by information gain, though they query the user about intent rather than an expert about a decision.

\paragraph{Credit assignment and routing.}
Localizing the step responsible for a failure has been studied through automated failure attribution, which requires a failure label our score does not~\citep{zhang2025whowhen}, and critic-free RL methods that group repeated environment states across rollouts to sharpen step-level advantages~\citep{feng2025gigpo,cheng2026graphcredit}. The grouping instinct is shared, and the consumers differ: they convert the grouped statistics into gradient updates for an unfrozen policy, whereas we convert them into a ranking that routes a metered expert to a frozen one. Process supervision localizes credit by training a step-level verifier~\citep{lightman2024verify,wang2024mathshepherd}, whereas we spend a few expert calls to author context, and the two could compose, with a verifier refining the outcome projection or breaking ties among equally ranked decisions. Uncertainty estimates over \emph{answers}~\citep{wang2023selfconsistency,kadavath2022know} and cost-aware model selection~\citep{chen2024frugalgpt,ong2025routellm,yue2024cascades} both operate on whole queries. We instead allocate expert attention within a query, at the decision whose alternatives separate the outcomes.

\section{SynAct: Synergistic Active Feedback Learning}
\label{sec:method}

The central challenge is deciding \emph{where expert attention is worth spending}. Given a fixed \emph{acquisition budget} $B$, we \textbf{acquire} the $B$ decision points that appear most consequential, have the expert \textbf{author} a reusable rule distilling what those decisions reveal, and \textbf{apply} that rule only when the current failure matches the learned error pattern. We make only one expert call per iteration. The objective is therefore to maximize the value obtained from each expert query rather than accuracy alone. We optionally repeat this process as the learned rule changes the agent's remaining failures (\S\ref{sec:iterate}).

\paragraph{Problem setup.}
An \emph{agent} is a small or medium open-weights model $\pi$ that interacts with an environment through a ReAct-style loop~\citep{yao2023react}. An \emph{expert} provides knowledge the agent does not have: a subject-matter expert in deployment, instantiated in our experiments by a strong model (Claude Opus~\citep{claudeopus}). The expert is available during training only, leaving the agent to operate alone at test time. A rollout is a sequence of state--action pairs $\tau=(s_0,a_0,\dots,s_{T})$, with $\mathrm{succ}(\tau)\in\{0,1\}$ denoting its outcome under the benchmark checker.

\subsection{Acquisition: Finding Consequential Decisions}
\label{sec:acq}

The ideal acquisition score would tell us how much an expert correction at each decision would improve the agent. That quantity is counterfactual: estimating it requires intervening on the decision and observing what would have happened, often before we know the task's answer. We therefore extract a proxy for consequence from the agent's own behavior. We run the same task multiple times, find decisions where different runs make different choices \emph{in the same state}, and ask whether those choices lead to different terminal outcomes. If all alternatives converge to the same outcome, the decision provides little evidence that changing it would matter. If different alternatives lead to different outcomes, the decision becomes a natural target for intervention. We capture this contrast with the mutual information $I(A;O)$ between the action at a decision and the rollout's terminal outcome class. Ranking by this signal concentrates useful decisions near the top, with the leading decile capturing most of the benefit (\S\ref{sec:res-budget}).

\paragraph{State-keyed grouping.}
For each training task, we independently sample $k$ rollouts from $\pi$ at temperature $>0$ ($k{=}8$ by default; Appendix~\ref{app:setup-hparam}) and group steps across rollouts by the environment state. Let $\phi(s)$ denote a state key: a normalized hash of the observation at $s$, canonicalized per benchmark by lower-casing, collapsing whitespace, and removing volatile fields such as step counters. A \emph{decision node} $v$ is a maximal set of (run, step) pairs sharing a state key, with $R(v)$ the runs reaching $v$, partitioned into \emph{arms} by the next action. We score only arms supported by at least $m{=}2$ runs, since an arm observed once cannot provide reliable within-state evidence. We key on state rather than trajectory position because two runs may reach the same state through different action sequences while facing exactly the same decision, a correspondence position-based grouping discards. We additionally verify the key against an order-invariant alternative: the acquired set changes on $5\%$ of tasks, and an expert judge deems the additional merges largely illegitimate (Appendix~\ref{app:grouping}).

\begin{assumption}[Decision identifiability]
\label{as:ident}
The state key $\phi$ is \emph{decision-sufficient}: if $\phi(s)=\phi(s')$ then the two states admit the same action set and induce the same conditional outcome distribution for each action. The outcome projection $\omega$ depends only on the terminal state. Conditioned on that state, $O$ is therefore independent of $A_v$.
\end{assumption}

Assumption~\ref{as:ident} makes the comparison meaningful. The first clause ensures that states sharing a key represent the same decision, while the second ensures that the outcome projection measures what happens \emph{after} the decision rather than encoding the decision itself. Without the latter, an $\omega$ that reads the branching action would make every decision consequential by construction. Deterministic environments sharpen the conditional outcome distributions but guarantee neither clause: both depend on the sufficiency of $\phi$ and $\omega$ (Appendix~\ref{app:scope}).

\paragraph{Outcome projection.}
The outcome variable should capture a decision's consequence without requiring a gold answer. Let $A_v$ be the action taken at $v$ and $O=\omega(\tau)$ the rollout's \emph{terminal outcome class}, computed from its own trajectory. We assign the same class to runs that end in the same state. In practice, we derive $O$ from the normalized final \texttt{SELECT} on SQL, sorted \texttt{craft}/\texttt{get} targets on TextCraft, the final non-navigation command on Bash, and the write-tool multiset on $\tau$-bench. Coarsening or refining these projections shifts the ranking but leaves the authored rule unchanged or improves it (Appendix~\ref{app:outcome}).

\paragraph{Acquisition score.}
We now measure how strongly the terminal outcome distinguishes the alternatives at a decision. Let $\mathcal{A}_m(v)=\{a : |R_a(v)| \ge m\}$ denote the supported arms and $R_m(v)=\bigcup_{a \in \mathcal{A}_m(v)} R_a(v)$ the runs they contain. Estimating both entropies by plug-in counts over $R_m(v)$ gives
\begin{equation}
  \mathcal{I}(v) \;=\; \hat{H}(O) \;-\; \sum_{a \in \mathcal{A}_m(v)} \frac{|R_a(v)|}{|R_m(v)|}\, \hat{H}(O \mid A{=}a),
  \label{eq:mi}
\end{equation}
the standard decomposition $I(A;O)=H(O)-H(O\mid A)$ evaluated on the supported population. The restriction to supported arms keeps the arm weights normalized and ensures $\mathcal{I}(v)\ge 0$. We retain the highest-scoring node from each task, rank tasks by that score, and send the top $B$ to the expert. Acquisition therefore requires neither a gold label nor a success check: the agent's own repeated behavior determines where expert attention is most likely to matter.

\subsection{Identifiability Under Repeated Sampling}
\label{sec:theory}

The acquisition score comes from only $k$ rollouts. The natural question is what it can reliably identify at finite sample sizes. We begin with two extremes the score should distinguish: decisions whose alternatives converge to one outcome and decisions whose alternatives lead to disjoint outcomes. Once a node is scorable, this distinction is exact for every realization.

\begin{definition}[Outcome-constant and outcome-separating nodes]
\label{def:nodes}
A scorable node $v$ is \emph{outcome-constant} if $\omega$ is constant on $R_m(v)$, so its runs reach one outcome class whichever arm they take. It is \emph{outcome-separating} if the arms of $\mathcal{A}_m(v)$ have disjoint outcome supports, that is $\omega(R_a(v))\cap\omega(R_{a'}(v))=\emptyset$ for $a\ne a'$, so the outcome determines which arm was taken.
\end{definition}

\begin{proposition}[Exact separation at finite $k$]
\label{prop:sep}
Fix $m\ge 2$ and let $v$ be scorable, meaning $|\mathcal{A}_m(v)|\ge 2$. If $v$ is outcome-constant then $\mathcal{I}(v)=0$ for each realization of the rollouts, and if $v$ is outcome-separating then $\mathcal{I}(v)=\hat{H}(A_v)\ge H_2(m/k)$, with $H_2$ the binary entropy in bits. Hence any threshold $t\in\bigl(0,H_2(m/k)\bigr)$ classifies both types with zero error for every realization, and at the default $k{=}8$, $m{=}2$ the admissible range is $t\in(0,0.811)$.
\end{proposition}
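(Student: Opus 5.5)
The plan is to work directly with the plug-in decomposition in \eqref{eq:mi} on the supported population $R_m(v)$, and treat the two node types separately. Throughout, write $n=|R_m(v)|$ and $n_a=|R_a(v)|$ for $a\in\mathcal{A}_m(v)$. Since arms partition runs by next action, $n=\sum_a n_a$ and $n\le k$. Scorability gives at least two arms, each with $n_a\ge m$.

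\emph{Outcome-constant case.} If $\omega$ is constant on $R_m(v)$, the empirical distribution of $O$ over $R_m(v)$ is a point mass, so $\hat H(O)=0$. The same holds for each conditional empirical distribution, since $R_a(v)\subseteq R_m(v)$, so every $\hat H(O\mid A{=}a)=0$. Hence $\mathcal{I}(v)=0$ identically, for every realization.

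\emph{Outcome-separating case.} Disjoint outcome supports mean the empirical outcome determines the arm: there is a map $g$ with $A_v=g(O)$ on $R_m(v)$. This gives $\hat H(A\mid O)=0$ in the plug-in (empirical joint) distribution. Plug-in entropies are exact entropies of the empirical joint distribution on $R_m(v)$, so the chain-rule identity $\hat H(O)-\hat H(O\mid A)=\hat H(A)-\hat H(A\mid O)$ holds exactly. Here the weights $n_a/n$ in \eqref{eq:mi} are precisely the empirical marginal of $A$ on $R_m(v)$, which is why restricting to supported arms matters. Therefore $\mathcal{I}(v)=\hat H(A_v)$, the entropy of $(n_a/n)_a$.

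\emph{Lower bound.} The remaining step is a combinatorial minimization, and it is the one needing care. We must show that $H(p)\ge H_2(m/k)$ for any distribution $p_a=n_a/n$ with at least two atoms, each $n_a\ge m$, and $n\le k$. First, merging all arms but one into a single atom can only decrease entropy, by grouping. This reduces to a binary distribution $(q,1-q)$ with $q=n_{a_1}/n$ for the smallest arm $a_1$, and the merged mass is also at least $m/n$. The smallest arm satisfies $m/n\le q\le 1/2$. Since $H_2$ is increasing on $[0,1/2]$, we get $H_2(q)\ge H_2(m/n)$. Finally, $m/n\ge m/k$, and $m/n\le 1/2$ because $n\ge 2m$, so $m/k\le m/n\le 1/2$. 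Monotonicity then gives $H_2(m/n)\ge H_2(m/k)$. This requires $m/k\le 1/2$, which is implied by scorability ($2m\le n\le k$).

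Combining the cases, outcome-constant nodes score exactly $0<t$ and outcome-separating nodes score at least $H_2(m/k)>t$. Any threshold $t$ in the stated interval therefore classifies both types with zero error, for every realization. For $k=8$, $m=2$, we evaluate $H_2(1/4)=\tfrac14\log_2 4+\tfrac34\log_2\tfrac43\approx0.811$.
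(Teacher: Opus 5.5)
Your proposal is correct and follows the paper's proof essentially step for step. The paper likewise identifies $\mathcal{I}(v)$ with the empirical mutual information to get $\mathcal{I}(v)=\hat{H}(A_v)-\hat{H}(A_v\mid O)=\hat{H}(A_v)$ in the separating case, and proves the floor $\hat{H}(A_v)\ge H_2(m/N)\ge H_2(m/k)$ by the same grouping argument on a minimum-count arm with $q\in[m/N,1/2]$ and monotonicity of $H_2$. The only cosmetic difference is the outcome-constant case: you zero every term of Equation~\ref{eq:mi} directly, whereas the paper uses the bound $0\le\mathcal{I}(v)\le\hat{H}(O)=0$.
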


We prove both clauses in Appendix~\ref{app:proof-prop}, and we can read the result directly off Equation~\ref{eq:mi}: an outcome-constant node has $\hat{H}(O)=0$, an outcome-separating node has $\hat{H}(A_v\mid O)=0$ so its mutual information reaches $\hat{H}(A_v)$, and the support floor supplies the finite-sample gap between them, since any arm appearing in at least $m$ of the $k$ rollouts forces $\hat{H}(A_v)\ge H_2(m/k)$. Sampling noise therefore cannot collapse the two extremes. What sampling does control is \emph{scorability}: since $N_{v,a}\sim\mathrm{Binomial}(k,p_{v,a})$, we retain an arm taken half the time with probability $0.965$ at $k{=}8$, but one taken a tenth of the time only with probability $0.187$ (Proposition~\ref{prop:recovery}). The support floor thus trades coverage for reproducibility, keeping the alternatives a written rule can address.

\paragraph{Partial identifiability.}
Most decisions lie between the two extremes. In this regime, $\mathcal{I}(v)=\hat{H}(A_v)-\hat{H}(A_v\mid O)$ (Appendix~\ref{app:proof-lemma}): the score measures the fraction of behavioral variability at $v$ that the outcome resolves, identifying a decision to the extent that its alternatives produce distinguishable outcomes (Proposition~\ref{prop:partial}, Appendix~\ref{app:proof-partial}). The plug-in estimator also carries a positive null bias, $\mathbb{E}[\mathcal{I}(v)]>0$ even for outcomes independent of the action. We use $\mathcal{I}(v)$ to \emph{rank} decisions rather than as a hypothesis test. We compute the estimator's exact null distribution, its growth with the number of outcome classes, and its ranking stability at the default budget in Appendix~\ref{app:proof-null}.

\paragraph{Why outcomes rather than actions or success?}
Action entropy $\hat{H}(A_v)$ detects behavioral variation but cannot tell whether it matters: an outcome-constant and an outcome-separating node can share the same action entropy, while $\mathcal{I}(v)$ is $0$ for the former and $\hat{H}(A_v)$ for the latter. Success creates the opposite problem: replacing $O$ with $\mathrm{succ}(\tau)$ makes an all-fail task outcome-constant, hiding exactly the tasks for which the agent most needs external knowledge (Proposition~\ref{prop:allfail}, Appendix~\ref{app:proof-allfail}). Our outcome projection instead preserves distinctions among \emph{wrong} end states, keeping the signal informative even when no rollout succeeds. Empirically, $I(A;O)$ routes $1.9\times$ more decisions than a success-based signal ($274$ against $146$ on one SQL model), with $20$ of $49$ acquired Qwen TextCraft tasks coming from all-fail cases (Appendix~\ref{app:coverage}).

\subsection{Authoring: From Episodic Evidence to Semantic Rule}
\label{sec:author}

Acquisition identifies \emph{where} expert attention is valuable, and authoring determines \emph{what} reusable knowledge that evidence supports. For each acquired node, we construct an evidence block containing the task goal and two arms with different terminal outcomes, each with its diverging action, the trajectory that followed, and a \texttt{correct}/\texttt{wrong} tag from its success flag. When no rollout succeeds, we instead mark the block \texttt{ALL-FAIL} and provide two distinct wrong end states. Acquisition thus remains label-free, while authoring may use success annotations where they exist.

A single decision can expose a mistake without revealing whether that mistake recurs across tasks. We therefore aggregate all acquired blocks and give the expert the full digest in \textbf{one call}. We ask the expert to \emph{diagnose} recurring failure modes, \emph{count} the blocks each mode explains, mark those a fixed convention could repair, and \emph{write} one convention of 2--4 imperative lines covering the dominant repairable modes, with individual bullets free to address distinct modes, plus one \emph{antipattern} summarizing the family of mistakes the convention prevents. This aggregation turns isolated episodes into a transferable rule, and the single call decouples expert cost from the acquisition budget: more evidence costs no additional expert invocations.

\subsection{Application: Selective Injection Behind an Error Gate}
\label{sec:gate}

A useful convention can still hurt when applied to the wrong failure. SynAct lets the current attempt determine whether the learned rule should be used on the next one. Each task receives at most two attempts. We first run the agent with no convention at temperature $0$ and stop if it succeeds. Otherwise, an LLM gate inspects the failing trajectory for the learned antipattern. Only when the gate fires do we prepend the convention and retry once. The gate sees neither the success bit nor the reference solution. We use the agent model for gating rather than the expert, so this step adds no expert cost (Appendix~\ref{app:setup-gate}).

This design gives a pointwise non-regression guarantee: since a base success never sees the convention, the rule cannot convert it into a failure, and any improvement must come from repairing failures. We report accuracy as base successes plus gate-repaired failures and track $n_{\text{fired}}$, the tasks reaching the gated retry. In Appendix~\ref{app:gate-detail} we evaluate the gate against unconditional injection, validate its firing decisions against an expert judge, redeploy the baselines' knowledge behind the same gate, and substitute a weaker authoring expert.

\subsection{Iteration and Inference-Time Reuse}
\label{sec:iterate}

\paragraph{Iterative acquisition.}
Learning a rule changes the failure distribution. Once the first convention repairs some errors, previously hidden failure modes become more prominent, and the original rollouts cannot reveal this shift because they were generated before the rule existed. SynAct therefore re-collects after each update. At iteration~1, we acquire from base rollouts. At iteration $i>1$, we generate $k$ fresh rollouts with the current convention active, score the residual failures, and ask the expert to \emph{revise} the convention: retaining bullets supported by the new evidence, adding bullets for newly surfaced modes, and removing bullets the evidence contradicts. We rewrite the antipattern alongside the convention so that the gate always evaluates the current rule, and each iteration thus spends its single expert call on what remains unrepaired. We save and evaluate every iteration's rule independently, turning iteration into a budget--performance curve rather than a fixed schedule (\S\ref{sec:res-budget}).

\paragraph{Inference-time variant.}
\label{sec:online}
The same signal can allocate expert attention directly when the expert remains available at deployment, as in an LLM cascade~\citep{chen2024frugalgpt,yue2024cascades}. Because $\mathcal{I}(v)$ is computed from $k$ rollouts of the task at hand, the separation result applies under the same conditions with no training set. Instead of amortizing the feedback into a rule, SynAct spends the signal immediately. We first draw $k$ rollouts, and if the task is \emph{behaviorally consistent}, meaning its modal outcome class covers at least a fraction $\theta$ of runs, we return the modal answer at no further cost. Otherwise, under a budget permitting expert consultation on a fraction $B$ of tasks, we send the top $B$ inconsistent tasks by $\mathcal{I}(v^\star)$ to the expert, which supplies a correction at $v^\star$ for the agent to apply on retry, and the remainder receive additional self-reflection~\citep{shinn2023reflexion} at a matched attempt budget, equivalently a per-benchmark threshold on $\mathcal{I}(v^\star)$.

Consistency thus determines \emph{whether} to intervene, while $\mathcal{I}(v^\star)$ determines \emph{where} expert attention is worth spending: SynAct reserves the expert for cases where behavioral disagreement points to a consequential decision, and resolves the remainder with cheaper self-reflection. Offline, the expert teaches once and the resulting rule deploys without the expert, while online the expert is consulted per difficult task and needs no training set (\S\ref{sec:res-online}).

\section{Experimental Setup}
\label{sec:setup}

\paragraph{Benchmarks.}
We evaluate on four agent benchmarks with binary success checks: InterCode-SQL on Spider ($n_{\text{test}}{=}150$~\citep{yang2023intercode,yu2018spider}), TextCraft at recipe depth 3 ($n{=}58$~\citep{prasad2024adapt}), InterCode-Bash on nl2bash ($n{=}40$~\citep{yang2023intercode,lin2018nl2bash}), and $\tau$-bench retail ($n{=}115$~\citep{yao2024taubench}), training on a disjoint split of up to 50 tasks each. We select these four because each exposes an observable state that recurs across rollouts, which is what makes a state key well defined under Assumption~\ref{as:ident}, and because they span text-to-SQL, compositional crafting, shell scripting, and tool-use dialogue (Appendix~\ref{app:setup-bench}).

\paragraph{Agent models and expert.}
We run four models from three families, Qwen3-32B~\citep{qwen3}, Gemma-3-12B and Gemma-3-4B~\citep{gemma3}, and Ministral-3-3B~\citep{ministral3}, and substitute Claude Opus~\citep{claudeopus} for the subject-matter expert, held fixed across all $16$ settings and iterations. The substitution is conservative: the expert sees only the acquired decision points and returns a short convention, an interface a human could equally serve, though whether one would author the same convention is untested. We span weak-to-mid agents, where expert feedback has the most headroom, and Qwen3-32B is the strong control (Appendix~\ref{app:setup-models}).

\paragraph{Baselines.}
We compare against four experience-distillation baselines and give each the same expert SynAct receives. The comparison thus isolates \emph{where} expert attention is pointed from \emph{whether} it is available: \textbf{ACE}~\citep{zhang2026ace} grows an evolving playbook of tactics prepended to the prompt, \textbf{ReasoningBank}~\citep{ouyang2026reasoningbank} retrieves distilled strategies from a memory of past episodes, \textbf{AWM}~\citep{wang2025awm} induces workflow snippets from successful trajectories, and \textbf{AWM-iter} runs AWM for ten iterative expert induction rounds, matching our iteration count. We tune setup parameters alone and leave the methods unmodified (Appendix~\ref{app:setup-baselines}). In \S\ref{sec:res-signal} we add \emph{acquisition-function} baselines holding our pipeline fixed and varying only the selection rule: SynAct\nobreakdash-Succ ($I(A;\mathrm{succ})$), SynAct\nobreakdash-Ent (action entropy), and SynAct\nobreakdash-Rand (uniform).

\paragraph{Protocol.}
We report each setting as the mean of 3 repeats on the full test set, holding the rule, gate, and test set fixed except for the ablated component. The gate re-measures the unaided agent at each iteration, letting us estimate measurement noise directly: base accuracy varies by $3.4$--$8.5$pp across iterations. We pool base where possible and treat differences of this size between adjacent iterations as indistinguishable (Appendix~\ref{app:setup-protocol}). Agent and expert tokens are reported separately because expert tokens cost roughly $10$--$30\times$ more; turns are exact and tokens estimated (Appendix~\ref{app:cost}).

\section{Results}
\label{sec:results}

\subsection{Routed Expert Feedback Against Experience Distillation}
\label{sec:res-main}

\paragraph{Accuracy against experience distillation.}
Distilling experience into context is the prevailing way to improve a frozen agent. We first ask what routing the same expert to decision points buys over distilling with it. We give ACE, ReasoningBank, AWM, and ten-round AWM-iter the identical expert and hold the training split, test set, and agent fixed (Figure~\ref{fig:train-headline}, Table~\ref{tab:main}). SynAct improves on base in all 16 settings, by $+8.3$pp on average after a single iteration and $+11.7$pp at the best iteration. Taking the strongest baseline per setting, an oracle over the four, yields $+4.4$pp on average, while no fixed baseline exceeds $+1.0$pp. SynAct at one iteration therefore nearly doubles the gain of even the per-setting oracle. Exact sign tests over the 16 settings confirm the advantage against base, each baseline, and the oracle at $p \le 2.1\times 10^{-3}$ (Appendix~\ref{app:tables}). At its best iteration, SynAct leads all four baselines in 15 of 16 settings, the exception being Ministral TextCraft, where AWM reaches $20.7$ against our $19.5$. The largest gains occur where the base agent is weakest, $+24.8$pp for Gemma-4B on Bash and $+18.0$pp for Ministral on SQL, which identifies SynAct's most favorable regime: agents that can reach informative end states yet lack the knowledge to reach the correct one. The controlled comparison points to \emph{where} attention is spent: distillation summarizes whole trajectories, whereas SynAct routes the expert to the decisions determining outcomes (Appendix~\ref{app:selfjudge}).

\begin{figure}[t]
\centering
\includegraphics[width=1.0\linewidth]{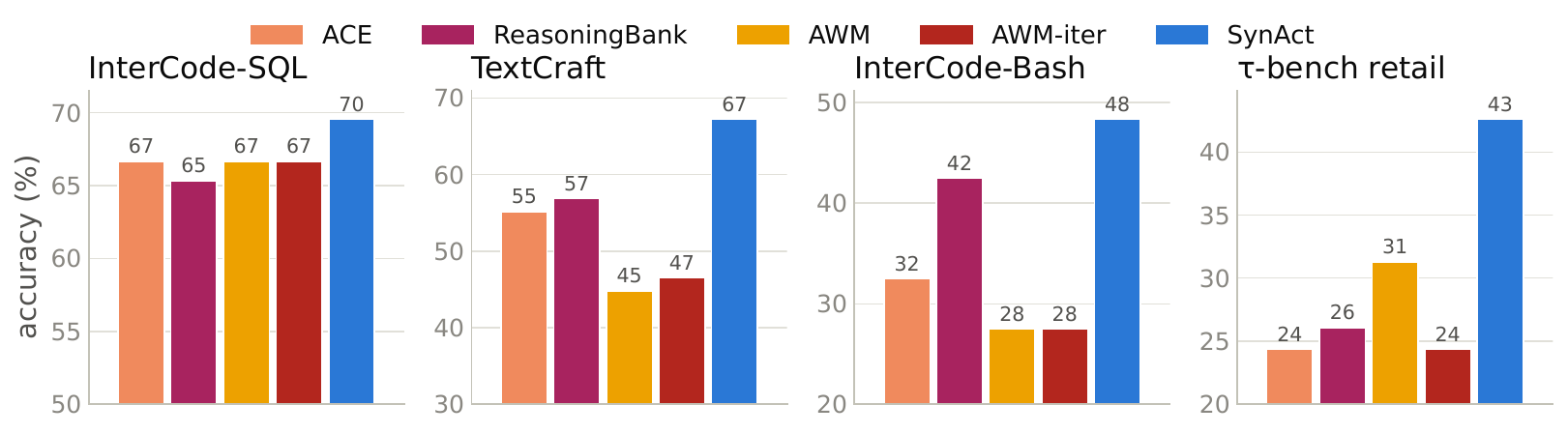}
\vspace{-10pt}
\caption{\textbf{Training-time methods with an equal expert budget} (agent Qwen3-32B, one panel per benchmark): ACE, ReasoningBank, AWM, AWM-iter, and SynAct at its best iteration, all given the same expert. Table~\ref{tab:main} reports all four agent models.}
\label{fig:train-headline}
\end{figure}

\paragraph{Downside risk.}
Mean accuracy conceals whether a method repairs failures or exchanges them for new errors. To separate the two, we compare each method's worst setting to its base. SynAct stays at or above base within noise in every setting, whereas always-on injection can substantially reduce accuracy: ReasoningBank costs Gemma-4B $17.6$pp on SQL ($47.6\rightarrow30.0$), and ACE costs Ministral $3.6$pp on the same benchmark. The damage concentrates on the weakest agents, the population with the most headroom, and the distinction is selective application. Holding the rule fixed and varying only its application, unconditional injection yields 32 to 36 win$\rightarrow$lose flips on SQL, compared with $0$ when gated. We therefore deploy the gate without an additional validation step, and we further validate its firing decisions against an expert judge, which the gate rarely misses (Appendix~\ref{app:gate-detail}).

\subsection{Comparison of Acquisition Functions}
\label{sec:res-signal}

Consulting a strong model helps on its own. We therefore ask whether the choice of decisions drives the gain. We hold the rollouts, authoring call, and consistency gate fixed, varying only the selection rule. Outcome-MI is the best rule in all 8 swept settings and all 23 pairwise comparisons, gaining $+12.5$pp over base, compared with $+8.2$ for random selection, $+6.9$ for success-MI, and $+6.8$ for entropy. The entropy comparison reveals why outcome information matters: when two actions differ frequently while converging on the same outcome, ranking by disagreement alone spends the budget on variation without consequence, precisely the distinction Proposition~\ref{prop:sep} captures. Success-MI fails in the complementary regime, remaining silent on all-fail tasks where the success label is constant (Appendix~\ref{app:coverage}). Under the final protocol, the four rules yield indistinguishable one-iteration gains, suggesting that outcome information matters where the theory predicts: all-fail coverage and per-task routing, while the training loop is largely insensitive to the ranking (Appendices~\ref{app:budget} and~\ref{app:signal-routing}).

\subsection{Expert-Token and Sampling Budgets}
\label{sec:res-budget}

\paragraph{Cost of a routed expert budget.}
\label{sec:res-cost}
\begin{wrapfigure}{r}{0.39\linewidth}
\centering
\vspace{-39pt}
\includegraphics[width=\linewidth]{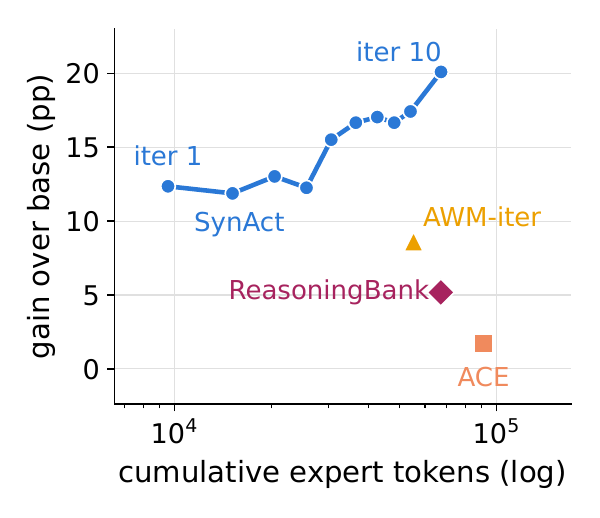}
\vspace{-25pt}
\caption{\textbf{Gain vs. cumulative expert tokens} (TextCraft, Qwen3-32B, log $x$): SynAct over ten iterations, distillers at their measured budgets.}
\label{fig:scaling}
\vspace{-20pt}
\end{wrapfigure}
With the expert held fixed, we next measure how much accuracy each method buys per expert token. Figure~\ref{fig:scaling} plots SynAct's gain over base across ten iterations against the cumulative expert tokens spent authoring the rule, with the one-shot distillers at their own measured budgets. The distillers cluster near zero gain at comparable spend, whereas SynAct's gain grows through the first several iterations before flattening: routed expert tokens keep buying accuracy after the distillers have saturated. Expert turns diverge further, one call per iteration against one or more per task (Appendix~\ref{app:cost-detail}).

\paragraph{Sensitivity to sampling budget and iteration count.}
Since rollouts dominate the cost, we turn both knobs down. All six swept settings beat base at $k{=}2$, a $4\times$ cheaper collection, and the top decile of MI-ranked decisions already delivers most of the full-budget gain (Table~\ref{tab:budget}, Appendices~\ref{app:budget} and~\ref{app:inflight}). Iteration then compounds what a single pass leaves: the first iteration captures $70\%$ of the median setting's best-iteration gain, and later passes lift Qwen TextCraft from $+11.4$pp to $+20.6$pp. Across all $160$ iteration evaluations, exactly one dips below base, by $0.2$pp within noise (Table~\ref{tab:curve}, Appendix~\ref{app:iter}).

\subsection{Inference-Time Routing}
\label{sec:res-online}

\begin{figure}[t]
\centering
\includegraphics[width=\linewidth]{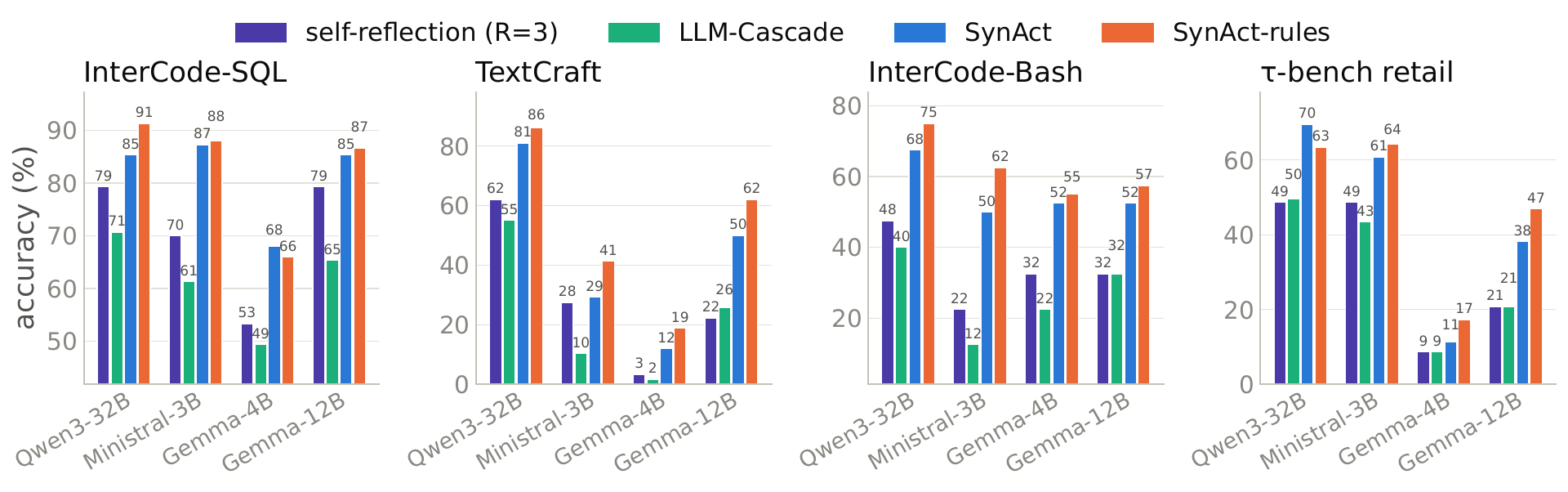}
\vspace{-10pt}
\caption{\textbf{Inference-time routing at a matched expert budget} ($\theta{=}0.6$, $R{=}3$, one panel per benchmark): self-reflection with ten retries, a token-matched cascade, SynAct, and SynAct with the iteration-10 rule active. Table~\ref{tab:online} adds the action-entropy and expert-judge routers.}
\label{fig:online-headline}
\end{figure}

We finally test whether the allocation principle transfers to inference, matching each router to SynAct's budget (\S\ref{sec:iterate}) and varying only its aim: a token-matched cascade~\citep{chen2024frugalgpt,yue2024cascades}, an action-entropy router, and an expert judge routing tasks itself (Figure~\ref{fig:online-headline}, Table~\ref{tab:online}). SynAct or its rule-augmented variant is the best router in 15 of 16 settings, and on tasks both routers admit, the outcome-MI branch rescues up to $2.7\times$ more (Appendix~\ref{app:signal-routing}). Against self-reflection with ten retries~\citep{shinn2023reflexion}, three attempts plus routed consultation win 14 of 16 settings and add $+11.6$pp on average: one aimed correction outperforms seven unaimed retries. The token-matched cascade never exceeds the retry baseline, consistent with whole-answer escalation wasting budget on localized failures.

The learned rule and online routing also compose: activating the iteration-10 convention alongside routing lifts accuracy in 14 of 16 settings over routing alone, by $+5.1$pp on average and $+12.1$pp for Gemma-12B TextCraft, as the standing rule repairs the recurring mode and frees the routed call. Raising the retry budget from $R{=}3$ to $10$ never hurts and helps 11 of 16 settings (Appendix~\ref{app:online}).

\section{Conclusion}
\label{sec:conclusion}

We recast improving an LLM agent as active learning over decision points: under scarce expert attention, the question is not which trajectories to review, but which decisions are worth explaining. SynAct answers with the mutual information between an action at a state and the terminal outcome class, which under decision-sufficient projections exactly separates outcome-separating from outcome-constant decisions at finite $k$ while remaining informative where success-based scores are blind. One expert call per iteration turns this signal into a gated convention improving all 16 settings, while four distillers using the same expert budget average under $+1$pp. The same signal also routes consultation at deployment, suggesting a natural next step: closing the loop with human experts, where turns rather than tokens are the scarce resource.

\paragraph{Limitations.}
Because Assumption~\ref{as:ident} constrains the projections rather than the dynamics, SynAct applies to stochastic environments, though we do not measure that degradation. The label-free claim covers acquisition alone, and the consistency gate can miss confidently wrong but behaviorally consistent failures. Our expert is a proxy for a strong human, leaving the cost of human authoring untested, and the support floor can miss rare modes (Appendices~\ref{app:gate-detail}, \ref{app:proof-recovery}).

\subsection*{AI use statement}
In this work, we used generative AI tools for designing or providing feedback on research methodology or experiments, implementing methods, and interpreting results. We have not used generative AI tools for generating synthetic data sets, developing theoretical models or conceptual frameworks, formulating mathematical claims, providing critical ingredients for proving mathematical claims, assisting in the writing of proofs, or proposing or refining hypotheses. Assisting with translation, cleaning and reformatting datasets, and supporting qualitative and thematic data analysis are not applicable to this work. Additionally, we used generative AI tools for editing the paper to improve readability and for identifying relevant literature. We have reviewed all AI-assisted work. LLM-generated code was verified and tested for correctness, AI-suggested interpretations were checked against outputs, and AI-identified references were verified against their source records. We take responsibility for the final content of this work, including text, claims or artifacts produced with the aid of generative AI.

\subsection*{Reproducibility statement}
The method is specified completely in \S\ref{sec:method}, covering the state key and grouping (\S\ref{sec:acq}, Appendix~\ref{app:grouping}), the per-benchmark outcome projections $\omega$ (Appendix~\ref{app:outcome}), the acquisition score with its $m{\ge}2$ arm floor (Eq.~\ref{eq:mi}), the verbatim authoring prompt and output schema (Appendix~\ref{app:example}), and the two-attempt gated application protocol (\S\ref{sec:gate}). Appendix~\ref{app:setup} specifies splits, step limits, models, hyperparameters, the gate, baseline configurations, and the measurement protocol. All benchmarks are public~\citep{yang2023intercode,prasad2024adapt,yao2024taubench} and agent models are served through a public API. Each number in the iteration and budget tables comes from a script that refuses to print any entry lacking all 3 repeats: partially-evaluated results keep moving, and no number is transcribed by hand. The harness also distinguishes a task the agent failed from a call that never completed, retrying transport and throttling errors rather than recording them as failures.

\bibliography{nliu}
\bibliographystyle{iclr2027_conference}

\appendix

\section{Identifiability of Consequential Decisions}
\label{app:proofs}

We characterize what $\mathcal{I}(v)$ can identify from only $k$ rollouts, proceeding from its empirical mutual-information representation to finite-sample separation, partial identifiability, sampling recovery, ranking behavior, and the assumptions these guarantees require. For a decision node $v$, let $\mathcal{A}_m(v)=\{a:|R_a(v)|\ge m\}$ be the arms clearing the support floor, $R_m(v)=\bigcup_{a\in\mathcal{A}_m(v)}R_a(v)$, $N=|R_m(v)|$, and $n_a=|R_a(v)|$. We compute entropies as plug-in estimates in bits on $R_m(v)$, call $v$ \emph{scorable} when $|\mathcal{A}_m(v)|\ge 2$, and write $H_2(x)=-x\log_2 x-(1-x)\log_2(1-x)$. Restating Eq.~\ref{eq:mi},
\begin{equation}
\mathcal{I}(v)=\hat{H}(O)-\sum_{a\in\mathcal{A}_m(v)}\frac{n_a}{N}\hat{H}(O\mid A{=}a).
\label{eq:mi-restate}
\end{equation}

\subsection{The Mutual Information Representation}
\label{app:proof-lemma}

We first identify what $\mathcal{I}(v)$ measures, and the support restriction makes the answer exact. Because both entropy terms are computed on the same population $R_m(v)$, the arm weights sum to one, and Eq.~\ref{eq:mi-restate} is exactly mutual information under the empirical joint distribution of actions and outcomes.

\begin{lemma}[Empirical joint representation]
\label{lem:joint}
Let $\hat{P}$ be the empirical joint distribution of $(A_v,O)$ on $R_m(v)$, that is $\hat{P}(a,o)=|\{\tau\in R_a(v):\omega(\tau)=o\}|/N$ for $a\in\mathcal{A}_m(v)$. Then $\mathcal{I}(v)=I_{\hat{P}}(A_v;O)$, the mutual information of $\hat{P}$. Consequently
\begin{equation}
\mathcal{I}(v)=\hat{H}(O)-\hat{H}(O\mid A_v)=\hat{H}(A_v)-\hat{H}(A_v\mid O),
\label{eq:sym}
\end{equation}
and $0\le \mathcal{I}(v)\le\min\{\hat{H}(A_v),\hat{H}(O)\}$.
\end{lemma}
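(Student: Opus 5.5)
The plan is to verify that the two plug-in entropies in Eq.~\ref{eq:mi-restate} are exactly the marginal and conditional entropies of the single distribution $\hat{P}$. Then the standard identities for mutual information apply verbatim.

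\textbf{Step 1: $\hat{P}$ is a genuine joint distribution.} Since the arms $R_a(v)$, $a\in\mathcal{A}_m(v)$, partition $R_m(v)$, and each run has exactly one outcome class, $\sum_{a,o}\hat{P}(a,o)=\sum_a n_a/N=1$. Its action marginal is $\hat{P}(a)=n_a/N$, and its outcome marginal $\hat{P}(o)$ is the empirical outcome frequency on $R_m(v)$. So $\hat{H}(O)$ in Eq.~\ref{eq:mi-restate} is $H_{\hat{P}}(O)$.

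\textbf{Step 2: identify the conditional term.} For each supported arm, $\hat{H}(O\mid A{=}a)$ is the plug-in entropy of the outcome frequencies within $R_a(v)$. These frequencies are $\hat{P}(a,o)/\hat{P}(a)$, the conditional of $\hat{P}$. The weights $n_a/N$ are exactly $\hat{P}(a)$. Hence the weighted sum equals $H_{\hat{P}}(O\mid A_v)$, and $\mathcal{I}(v)=H_{\hat{P}}(O)-H_{\hat{P}}(O\mid A_v)=I_{\hat{P}}(A_v;O)$.

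This step is the only real content. It relies on all entropies being computed on the same population $R_m(v)$. Had $\hat{H}(O)$ been taken over all of $R(v)$ while the conditional sum ranged only over supported arms, the weights would not match the marginal and the identity would fail. I will state this point explicitly.

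\textbf{Step 3: symmetry and bounds.} Expanding $I_{\hat{P}}(A_v;O)=\sum_{a,o}\hat{P}(a,o)\log_2\frac{\hat{P}(a,o)}{\hat{P}(a)\hat{P}(o)}$ shows the expression is symmetric in its two arguments. The chain rule $H(A,O)=H(O)+H(A\mid O)=H(A)+H(O\mid A)$ then yields Eq.~\ref{eq:sym}.

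Nonnegativity follows because mutual information is a KL divergence, $D(\hat{P}\,\|\,\hat{P}_A\otimes\hat{P}_O)$, which is nonnegative by Gibbs' inequality. Equivalently, conditioning reduces entropy via Jensen's inequality applied to the concave function $-x\log x$.

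For the upper bound, conditional entropies of discrete variables are nonnegative. Applying this to $\hat{H}(O\mid A_v)$ and to $\hat{H}(A_v\mid O)$ in the two forms of Eq.~\ref{eq:sym} gives $\mathcal{I}(v)\le\hat{H}(O)$ and $\mathcal{I}(v)\le\hat{H}(A_v)$, hence $\mathcal{I}(v)\le\min\{\hat{H}(A_v),\hat{H}(O)\}$.

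I do not expect any step to be a real obstacle. The work is bookkeeping over the supported population, and I will include the $0\log 0=0$ convention for outcome classes absent from an arm.
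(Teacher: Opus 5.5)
Your proposal is correct and follows the paper's proof. In both, the arm weights $n_a/N$ and within-arm outcome frequencies are identified as the marginal and conditional of $\hat{P}$ on the common population $R_m(v)$, substituted into Eq.~\ref{eq:mi-restate}, and the symmetric form and bounds are then read off from standard mutual-information properties; your explicit checks (normalization, the shared-population point, and nonnegativity via Gibbs' inequality) fill in what the paper leaves as ``standard.''
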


\begin{proof}
The marginal of $\hat{P}$ on $A_v$ is $\hat{P}(a)=n_a/N$, and its conditional distribution given $A_v{=}a$ is the empirical distribution of $\omega$ on $R_a(v)$. Substituting these distributions into Eq.~\ref{eq:mi-restate} gives $\mathcal{I}(v)=I_{\hat{P}}(A_v;O)$. The identities and bounds then follow from standard properties of mutual information. Because $\hat{P}$ is fixed for each realization, no assumption about the sampling process is required.
\end{proof}

We can therefore reason directly about the two variables that matter: the action taken at $v$ and the outcome reached after it. The key question is whether finite sampling gives us enough separation to identify consequential decisions.

\subsection{Finite-Sample Separation}
\label{app:proof-prop}

We next show that the support floor creates such a separation, and it is the only place where the rollout budget $k$ enters the bound.

\begin{lemma}[Arm-entropy floor]
\label{lem:floor}
If $v$ is scorable then $\hat{H}(A_v)\ge H_2(m/N)\ge H_2(m/k)$, with equality in the first bound when $|\mathcal{A}_m(v)|=2$ and the arm counts are $(m,N-m)$.
\end{lemma}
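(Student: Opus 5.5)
The plan is to reduce the arm distribution to a two-point distribution by coarsening, and then use monotonicity of $H_2$ on $[0,\tfrac12]$ twice. Everything is computed on the empirical distribution $\hat{P}(a)=n_a/N$ from Lemma~\ref{lem:joint}, so the bound holds for each realization and needs no probabilistic argument.

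\emph{Step 1 (coarsening).} Since $v$ is scorable, $\mathcal{A}_m(v)$ contains at least two arms, each with $n_a\ge m$. I would pick an arm $a_0$ of minimal count $n_0=\min_a n_a$ and define the binary variable $B=\mathbf{1}[A_v=a_0]$. Because $B$ is a deterministic function of $A_v$, we have $\hat{H}(A_v)\ge\hat{H}(B)=H_2(n_0/N)$. This is the grouping property of entropy: $\hat{H}(A_v)=\hat{H}(B)+\hat{H}(A_v\mid B)$, and the second term is nonnegative. Equality holds exactly when $\hat{H}(A_v\mid B)=0$, that is, when the complement of $a_0$ is a single arm. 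This is the two-arm case.

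\emph{Step 2 (first inequality).} Since $a_0$ has the minimal count among at least two arms, $n_0\le N/2$. Combined with $n_0\ge m$, this gives $m/N\le n_0/N\le \tfrac12$. Because $H_2$ is increasing on $[0,\tfrac12]$, we get $H_2(n_0/N)\ge H_2(m/N)$. Equality holds when $n_0=m$. Together with Step 1, the first bound is attained precisely at two arms with counts $(m,N-m)$, which is the stated equality case.

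\emph{Step 3 (second inequality).} Since $R_m(v)$ is a subset of the $k$ runs, $N\le k$, so $m/k\le m/N$. Scorability also forces $N\ge 2m$, so $m/N\le\tfrac12$. Both arguments therefore lie in $[0,\tfrac12]$, and monotonicity gives $H_2(m/N)\ge H_2(m/k)$.

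The only delicate point is keeping every argument of $H_2$ inside its increasing branch. $H_2$ is symmetric about $\tfrac12$, so a naive comparison could fail if $m/N$ exceeded $\tfrac12$. The argument avoids this by choosing $a_0$ as the smallest arm rather than an arbitrary one. The observation $N\ge 2m$ then handles both the comparison with $n_0/N$ and the comparison with $m/k$. As a sanity check, the default $k=8$, $m=2$ gives $H_2(1/4)\approx 0.811$, matching Proposition~\ref{prop:sep}.
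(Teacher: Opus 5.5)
Your proposal is correct and follows the paper's proof essentially step for step. The paper also takes a minimum-count arm and splits off $H_2(q_{a^\star})$ via the grouping property, which is your chain rule through $B=\mathbf{1}[A_v=a_0]$. It then uses $m/k\le m/N\le n_0/N\le 1/2$ (from $N\le k$, $N\ge 2m$ and minimality) with monotonicity of $H_2$ on $[0,1/2]$, and it characterizes equality exactly as you do: the residual entropy must vanish (a single complementary arm) and $n_0=m$.
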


\begin{proof}
Write $q_a=n_a/N$. Scorability gives at least two arms with $q_a\ge m/N$. Let $a^\star$ be an arm of minimum count, so $q_{a^\star}\in[m/N,1/2]$. By the grouping property of entropy,
\[
\hat{H}(A_v)=H_2(q_{a^\star})+(1-q_{a^\star})\,\hat{H}\!\left(\tfrac{q_a}{1-q_{a^\star}}\Big|_{a\ne a^\star}\right)\;\ge\;H_2(q_{a^\star})\;\ge\;H_2(m/N),
\]
using that $H_2$ is non-decreasing on $[0,1/2]$. Equality requires the residual term to vanish with $q_{a^\star}=m/N$, that is exactly two arms with counts $(m,N-m)$.

Scorability forces $N\ge 2m$, and $N\le k$, so $m/k\le m/N\le 1/2$ and monotonicity of $H_2$ on $[0,1/2]$ gives $H_2(m/N)\ge H_2(m/k)$.
\end{proof}

We can now ask whether the score exactly separates the two decision types that matter most. It does, provided the relevant arms are observed often enough.

\begin{proof}[Proof of Proposition~\ref{prop:sep}]
\emph{Clause 1, outcome-constant nodes.} If $\omega$ is constant on $R_m(v)$, then $\hat{H}(O)=0$, and Lemma~\ref{lem:joint} gives $0\le\mathcal{I}(v)\le\hat{H}(O)=0$.

\emph{Clause 2, outcome-separating nodes.} If the arms have disjoint outcome supports, then each observed outcome identifies a unique arm, so $\hat{H}(A_v\mid O)=0$. By Eq.~\ref{eq:sym},
\[
\mathcal{I}(v)=\hat{H}(A_v)-\hat{H}(A_v\mid O)=\hat{H}(A_v)\ge H_2(m/k),
\]
where the last inequality is Lemma~\ref{lem:floor}.

\emph{Separation.} For any $t\in\bigl(0,H_2(m/k)\bigr)$, outcome-constant nodes have $\mathcal{I}(v)=0<t$ and outcome-separating nodes have $\mathcal{I}(v)\ge H_2(m/k)>t$, so thresholding at $t$ separates the two classes without error. At $m{=}2$ and $k{=}8$, $H_2(2/8)=H_2(0.25)=2-\tfrac{3}{4}\log_2 3\approx 0.8113$, giving the admissible interval $(0,0.811)$ quoted in the proposition.
\end{proof}

The result requires no asymptotics. We obtain zero information when the outcome is constant, we obtain the full arm entropy when the outcome identifies the arm, and the support floor creates a finite-sample gap between the two. We also do not require each arm to produce a single outcome: an arm may produce several outcomes as long as no outcome is shared across arms. What remains is the ground between the extremes, where supports overlap.

\subsection{Partial Identifiability}
\label{app:proof-partial}

We rarely expect decisions to be perfectly constant or perfectly separating: arms share some outcomes while differing on others. We can characterize this intermediate regime directly through mutual information.

\begin{proposition}[Component-wise partial identifiability]
\label{prop:partial}
For a scorable node $v$, define the \emph{resolved fraction} $\rho(v)=\mathcal{I}(v)/\hat{H}(A_v)$ when $\hat{H}(A_v)>0$. Then
\begin{equation}
\rho(v)\in[0,1],\qquad \rho(v)=1-\frac{\hat{H}(A_v\mid O)}{\hat{H}(A_v)},
\label{eq:rho}
\end{equation}
so $\rho(v)=1$ exactly when $O$ determines the arm, recovering the outcome-separating case, and $\rho(v)=0$ exactly when $O$ carries no information about the arm, which includes the outcome-constant case. Furthermore
\begin{equation}
\hat{H}(A_v)-\mathcal{I}(v)=\hat{H}(A_v\mid O)\le\log_2\bigl|\mathcal{A}_m(v)\bigr|,
\label{eq:shortfall}
\end{equation}
and the shortfall of $\mathcal{I}(v)$ below $\hat{H}(O)$ is the average within-arm outcome entropy, $\hat{H}(O)-\mathcal{I}(v)=\sum_a (n_a/N)\hat{H}(O\mid A{=}a)$.
\end{proposition}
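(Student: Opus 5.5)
The whole argument runs through Lemma~\ref{lem:joint}. For each realization, $\mathcal{I}(v)=I_{\hat{P}}(A_v;O)$ under the empirical joint $\hat{P}$ on $R_m(v)$. Every claim in Proposition~\ref{prop:partial} then becomes a standard identity or inequality for the mutual information of that fixed distribution, so no sampling assumption is needed.

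The first step is the expression for $\rho(v)$. Scorability together with Lemma~\ref{lem:floor} gives $\hat{H}(A_v)\ge H_2(m/k)>0$, so $\rho(v)$ is always defined on scorable nodes. Dividing the second identity in Eq.~\ref{eq:sym}, $\mathcal{I}(v)=\hat{H}(A_v)-\hat{H}(A_v\mid O)$, by $\hat{H}(A_v)$ gives Eq.~\ref{eq:rho}. The bound $0\le\mathcal{I}(v)\le\hat{H}(A_v)$ from Lemma~\ref{lem:joint} then gives $\rho(v)\in[0,1]$.

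Next come the two endpoint characterizations.
\begin{itemize}
\item $\rho(v)=1$ holds iff $\hat{H}(A_v\mid O)=0$. This happens iff, for every outcome $o$ with $\hat{P}(o)>0$, the conditional $\hat{P}(\cdot\mid o)$ is a point mass, i.e.\ each observed outcome occurs under a single arm. That is precisely disjointness of the supports $\omega(R_a(v))$, so it matches Definition~\ref{def:nodes}.
\item $\rho(v)=0$ holds iff $I_{\hat{P}}(A_v;O)=0$. This happens iff $\hat{P}$ factorizes as $\hat{P}(a)\hat{P}(o)$, which is what ``$O$ carries no information about the arm'' means.
\item The outcome-constant case is included: there $\hat{H}(O)=0$ forces $\mathcal{I}(v)=0$ by the bound in Lemma~\ref{lem:joint}, as in the proof of Proposition~\ref{prop:sep}.
\end{itemize}
The one point needing care is the ``exactly when'' for $\rho=1$. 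Vanishing conditional entropy has to be stated only over outcomes of positive empirical mass, which is automatic since every outcome considered is observed.

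The shortfall bound in Eq.~\ref{eq:shortfall} uses the same identity. It gives $\hat{H}(A_v)-\mathcal{I}(v)=\hat{H}(A_v\mid O)$. Then
\[
\hat{H}(A_v\mid O)=\sum_o\hat{P}(o)\,\hat{H}(A_v\mid O{=}o)\le\log_2|\mathcal{A}_m(v)|,
\]
because $\hat{P}(\cdot\mid o)$ is supported on $\mathcal{A}_m(v)$ and entropy is at most the log of the support size. Equivalently, $\hat{H}(A_v\mid O)\le\hat{H}(A_v)\le\log_2|\mathcal{A}_m(v)|$.

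The final identity is the other form in Eq.~\ref{eq:sym}, $\mathcal{I}(v)=\hat{H}(O)-\hat{H}(O\mid A_v)$. Expanding $\hat{H}(O\mid A_v)$ with marginal weights $n_a/N$ gives $\hat{H}(O)-\mathcal{I}(v)=\sum_a (n_a/N)\hat{H}(O\mid A{=}a)$, which is Eq.~\ref{eq:mi-restate} rearranged.

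No step is a serious obstacle. The only subtlety is making sure the iff conditions refer to the empirical distribution on $R_m(v)$ rather than to population quantities.
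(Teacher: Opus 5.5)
Your proposal is correct and follows the paper's proof essentially step for step. Both reduce everything to Lemma~\ref{lem:joint}: $\rho$ comes from dividing Eq.~\ref{eq:sym} by $\hat{H}(A_v)$, the endpoints are read off from $\hat{H}(A_v\mid O)=0$ and from independence under $\hat{P}$, and the shortfall bound follows from $\hat{H}(A_v\mid O)\le\hat{H}(A_v)\le\log_2|\mathcal{A}_m(v)|$. Your added remark, that Lemma~\ref{lem:floor} makes $\hat{H}(A_v)>0$ automatic on scorable nodes so that $\rho(v)$ is always defined, is a small tightening the paper leaves implicit.
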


\begin{proof}
By Lemma~\ref{lem:joint}, $0\le\mathcal{I}(v)\le\hat{H}(A_v)$, giving $\rho(v)\in[0,1]$, and Eq.~\ref{eq:rho} follows by dividing $\mathcal{I}(v)=\hat{H}(A_v)-\hat{H}(A_v\mid O)$ by $\hat{H}(A_v)$. Thus $\rho(v)=1$ iff $\hat{H}(A_v\mid O)=0$, meaning each observed outcome co-occurs with exactly one arm, while $\rho(v)=0$ iff $A_v$ and $O$ are independent under $\hat{P}$, which an outcome-constant node satisfies because $O$ is degenerate. Equation~\ref{eq:shortfall} follows from Eq.~\ref{eq:sym} and $\hat{H}(A_v\mid O)\le\hat{H}(A_v)\le\log_2|\mathcal{A}_m(v)|$, and the final claim is Eq.~\ref{eq:mi-restate} rearranged.
\end{proof}

Equation~\ref{eq:rho} gives the interpretation: $\hat{H}(A_v)$ measures how much the agent's decisions vary, and $\mathcal{I}(v)$ measures the portion of that variation the observed outcome resolves. Arms reaching $\{x,y\}$ and $\{z,w\}$ at balanced counts give $\mathcal{I}(v)=\hat{H}(A_v)=1$ bit, whereas arms reaching $\{x,y\}$ and $\{y,z\}$ share the class $y$ and give $\mathcal{I}(v)=0.5$. SynAct therefore spends its acquisition budget on the component of decision variation the outcome actually resolves, and the digest exposes the corresponding divergent end states to the expert.

\subsection{Recovery Under Repeated Sampling}
\label{app:proof-recovery}

The separation result is conditional on scorability. The natural question is how often finite sampling recovers the relevant arms. A consequential arm that appears only once is invisible to the score however different its outcome, and we quantify this coverage directly.

\begin{proposition}[Arm recovery under repeated sampling]
\label{prop:recovery}
Fix a node $v$ and suppose each rollout is independent and reaches $v$ and takes arm $a$ with probability $p_{v,a}$. Then $N_{v,a}\sim\mathrm{Binomial}(k,p_{v,a})$ and
\begin{equation}
\Pr[N_{v,a}\ge m]=1-\sum_{j=0}^{m-1}\binom{k}{j}p_{v,a}^{\,j}(1-p_{v,a})^{k-j},
\label{eq:recovery}
\end{equation}
which at $m{=}2$ is $1-(1-p_{v,a})^k-kp_{v,a}(1-p_{v,a})^{k-1}$. If $r$ arms each satisfy $p_{v,a}\ge p_{\min}$, a union bound gives $\Pr[\text{all }r\text{ arms clear the floor}]\ge 1-r\bigl[(1-p_{\min})^k+kp_{\min}(1-p_{\min})^{k-1}\bigr]$, and $v$ is scorable once any two arms clear it.
\end{proposition}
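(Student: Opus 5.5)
The argument treats the count $N_{v,a}$ of rollouts that reach $v$ and take arm $a$ as a sum of independent indicators. For each of the $k$ rollouts $j$, let $X_j=\mathbf{1}[\text{rollout } j \text{ reaches } v \text{ and takes } a]$. By the stated hypothesis the $X_j$ are independent and each equals $1$ with probability $p_{v,a}$. Then $N_{v,a}=\sum_j X_j$ is, by definition, $\mathrm{Binomial}(k,p_{v,a})$. Since the $k$ rollouts are sampled independently from the same policy $\pi$ at a fixed temperature, they are i.i.d. and the hypothesis is exactly what is needed; nothing beyond it is used.

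Equation~\ref{eq:recovery} follows by complementation: $\Pr[N_{v,a}\ge m]=1-\Pr[N_{v,a}\le m-1]$, and we sum the binomial mass function over $j=0,\dots,m-1$. Setting $m=2$ leaves the terms $j=0$ and $j=1$, namely $(1-p_{v,a})^k$ and $kp_{v,a}(1-p_{v,a})^{k-1}$. This gives the displayed special case.

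For the union bound, define the failure event $E_a=\{N_{v,a}\le 1\}$ for each of the $r$ arms. Then $\Pr[\bigcup_a E_a]\le\sum_a\Pr[E_a]$. The only step needing care is turning each $\Pr[E_a]$ into the common bound at $p_{\min}$. This requires $g(p)=(1-p)^k+kp(1-p)^{k-1}$ to be non-increasing in $p$. Differentiating,
\[
g'(p)=-k(1-p)^{k-1}+k(1-p)^{k-1}-k(k-1)p(1-p)^{k-2}=-k(k-1)p(1-p)^{k-2}\le 0
\]
on $[0,1]$. Intuitively, $\Pr[\mathrm{Bin}(k,p)\le m-1]$ decreases in $p$ by stochastic monotonicity of the binomial, which also handles general $m$. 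Hence $\Pr[E_a]\le g(p_{\min})$ for every arm with $p_{v,a}\ge p_{\min}$. Summing over the $r$ arms and taking complements gives the stated lower bound on the probability that all $r$ arms clear the floor.

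Finally, the scorability remark is immediate from the definition: $v$ is scorable iff $|\mathcal{A}_m(v)|\ge 2$, i.e.\ iff at least two arms have $N_{v,a}\ge m$. So as soon as any two arms clear the floor, $v$ is scorable; in particular this holds on the event above whenever $r\ge 2$. As a sanity check, the numbers quoted in the main text come from plugging $k=8$, $m=2$ into $1-g(p)$: $p=1/2$ gives $1-9/256\approx0.965$, and $p=0.1$ gives $\approx0.187$. The main technical point is only the monotonicity step; the rest is direct.
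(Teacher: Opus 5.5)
Your proof is correct and takes the same route as the paper. You sum Bernoulli indicators to a binomial, use complementation for Eq.~\ref{eq:recovery}, apply monotonicity of the $m{=}2$ failure probability in $p_{v,a}$ to pass to $p_{\min}$, take a union bound over the $r$ arms, and read scorability off $|\mathcal{A}_m(v)|\ge 2$; your derivative $g'(p)=-k(k-1)p(1-p)^{k-2}\le 0$ just makes explicit the monotonicity step the paper asserts without computation.
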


\begin{proof}
Each rollout contributes an independent Bernoulli$(p_{v,a})$ trial, giving $N_{v,a}\sim\mathrm{Binomial}(k,p_{v,a})$ and Eq.~\ref{eq:recovery}. At $m{=}2$ the failure probability $(1-p_{v,a})^k+kp_{v,a}(1-p_{v,a})^{k-1}$ is non-increasing in $p_{v,a}$, and a union bound over the $r$ arms at $p_{v,a}\ge p_{\min}$ gives the stated bound. Any two retained arms make $v$ scorable.
\end{proof}

Table~\ref{tab:recovery} evaluates Eq.~\ref{eq:recovery} at the default $k{=}8$, $m{=}2$.

\begin{table}[t]
\centering
\small
\caption{\textbf{Arm recovery at the default budget} $k{=}8$, $m{=}2$, from Eq.~\ref{eq:recovery}. An arm the agent takes in half its rollouts is retained with probability $0.965$, whereas one taken a tenth of the time is retained with probability $0.187$. The support floor therefore selects reproducible decision components and discards singleton behavior.}
\label{tab:recovery}
\begin{tabular}{@{}lrrrrrr@{}}
\toprule
$p_{v,a}$ & 0.10 & 0.20 & 0.25 & 0.30 & 0.40 & 0.50 \\
$\Pr[N_{v,a}\ge2]$ & 0.187 & 0.497 & 0.633 & 0.745 & 0.894 & 0.965 \\
\bottomrule
\end{tabular}
\end{table}

The support floor therefore trades coverage for reproducibility. We do not claim $k{=}8$ recovers every consequential decision: it recovers the portion of behavior supported by repeated evidence, which is also the portion a written convention can address. Raising $k$ widens this coverage at linear rollout cost (Appendix~\ref{app:budget}).

\subsection{Visibility Beyond Success}
\label{app:proof-allfail}

We also ask whether the signal survives when success supplies no information. A natural alternative is to define $O$ by success, and this works when some attempts succeed, with a sharp blind spot: when every attempt fails, every decision looks identical under the success projection. Outcome information carries no such limitation.

\begin{proposition}[Beyond-capability visibility]
\label{prop:allfail}
Let $v$ be scorable on a task where no rollout succeeds, so $\mathrm{succ}(\tau)=0$ for every $\tau\in R_m(v)$. Then the success-based score is identically zero, $\mathcal{I}_{\mathrm{succ}}(v)=0$. If in addition two arms $a,a'\in\mathcal{A}_m(v)$ have differing empirical outcome distributions, $\hat{P}(O\mid A{=}a)\ne\hat{P}(O\mid A{=}a')$, then $\mathcal{I}(v)>0$.
\end{proposition}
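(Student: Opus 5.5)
The plan is to treat both clauses through Lemma~\ref{lem:joint}, which identifies the score with the mutual information of the empirical joint distribution on $R_m(v)$. The success-based score is the same construction with the outcome projection $\omega$ replaced by $\mathrm{succ}$. It is computed on the same supported population $R_m(v)$, so Lemma~\ref{lem:joint} applies to it verbatim with $O$ replaced by $S=\mathrm{succ}(\tau)$.

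For the first clause, I will note that $\mathrm{succ}(\tau)=0$ for every $\tau\in R_m(v)$. The empirical marginal of $S$ is then a point mass, so $\hat{H}(S)=0$. The bound $0\le\mathcal{I}_{\mathrm{succ}}(v)\le\hat{H}(S)$ from Lemma~\ref{lem:joint} forces $\mathcal{I}_{\mathrm{succ}}(v)=0$. This is exactly the outcome-constant clause of Proposition~\ref{prop:sep}, applied to the success projection, and it holds for every realization.

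For the second clause, I will use the characterization of zero mutual information: $I_{\hat{P}}(A_v;O)=0$ if and only if $A_v$ and $O$ are independent under $\hat{P}$. Independence means $\hat{P}(O\mid A{=}a)=\hat{P}(O)$ for every $a$ with $\hat{P}(a)>0$. Every supported arm has $\hat{P}(a)=n_a/N\ge m/N>0$, so all conditionals in question are well defined. Under independence, any two supported arms would therefore share the same conditional outcome distribution, namely the marginal. The hypothesis $\hat{P}(O\mid A{=}a)\ne\hat{P}(O\mid A{=}a')$ contradicts this, so $\hat{P}$ is not a product distribution. Since mutual information is the KL divergence $D(\hat{P}\,\|\,\hat{P}_A\otimes\hat{P}_O)$, which is strictly positive whenever the two distributions differ, we get $\mathcal{I}(v)>0$.

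The only delicate point is the strictness in the second clause. Gibbs' inequality gives nonnegativity, and strict positivity follows from its equality condition. I would state that equality condition explicitly rather than rely on the bound $0\le\mathcal{I}(v)$ from Lemma~\ref{lem:joint}. I will also remark that the all-fail hypothesis is not used in the second clause at all. The point of the proposition is the contrast between the two clauses: on the same node, success carries no information, while a nontrivial $\omega$ distinguishing wrong end states can carry positive information.
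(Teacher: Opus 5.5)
Your proposal is correct and follows the paper's proof essentially step for step. The first clause is the outcome-constant clause of Proposition~\ref{prop:sep} applied to the success projection, via $0\le\mathcal{I}_{\mathrm{succ}}(v)\le\hat{H}(S)=0$ from Lemma~\ref{lem:joint}; the second uses Lemma~\ref{lem:joint} together with the fact that empirical mutual information vanishes iff $A_v$ and $O$ are independent under $\hat{P}$, which the differing conditionals rule out, and your extra details (well-defined conditionals since $n_a\ge m>0$, and the equality case of Gibbs' inequality for strictness) only make explicit what the paper leaves implicit.
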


\begin{proof}
For the first claim, $\mathrm{succ}$ is constant on $R_m(v)$, so Clause 1 of Proposition~\ref{prop:sep} gives $\mathcal{I}_{\mathrm{succ}}(v)=0$ for every realization. For the second, Lemma~\ref{lem:joint} identifies $\mathcal{I}(v)$ with $I_{\hat{P}}(A_v;O)$, which vanishes iff $A_v$ and $O$ are independent under $\hat{P}$. Independence would force $\hat{P}(O\mid A{=}a)=\hat{P}(O\mid A{=}a')$, contradicting the hypothesis, so $\mathcal{I}(v)>0$.
\end{proof}

The distinction is structural rather than empirical: success-based acquisition is blind on all-fail tasks by construction, whereas outcome information stays positive whenever different arms lead to distinguishable end states. This is the mechanism behind the coverage gap measured in \S\ref{sec:res-signal}, including the Gemma-12B Bash setting where success-MI authors no rule at all.

\subsection{Finite-Sample Ranking}
\label{app:proof-null}

The two extremes admit exact separation, and SynAct does not threshold decisions into classes. We rank them. We therefore ask what finite-sample estimation does to that ranking, and we answer under an exact null: take two arms with each run assigned uniformly, and $\omega$ drawn uniformly over $c$ classes independently of the arm, so $I(A_v;O)=0$ for the underlying law. Enumerating all $2^k c^k$ realizations and conditioning on scorability gives Table~\ref{tab:null}.

\begin{table}[t]
\centering
\small
\caption{\textbf{Exact null behavior of the plug-in score} when $O$ is independent of $A_v$, so the true mutual information is $0$. Computed by complete enumeration over all realizations, conditional on the node being scorable. $P_{\text{sc}}$ is the probability of scorability. The estimator is upward-biased, and the bias grows with the number of outcome classes $c$, which is why we rank by $\mathcal{I}(v)$ rather than threshold it at $0$.}
\label{tab:null}
\begin{tabular}{@{}lrrrr@{}}
\toprule
& \multicolumn{4}{c}{$k$} \\
\cmidrule(l){2-5}
Quantity ($m{=}2$, $c{=}2$) & 4 & 6 & 8 & 10 \\
\midrule
$P_{\text{sc}}$                              & 0.375 & 0.781 & 0.930 & 0.979 \\
$\mathbb{E}[\mathcal{I}(v)\mid\text{scorable}]$ & 0.281 & 0.177 & 0.124 & 0.094 \\
$\Pr[\mathcal{I}(v)\ge 0.3\mid\text{scorable}]$ & 0.625 & 0.219 & 0.159 & 0.063 \\
\midrule
$\mathbb{E}[\mathcal{I}(v)\mid\text{scorable}]$, $c{=}3$ & 0.463 & 0.328 & 0.247 & 0.195 \\
$\mathbb{E}[\mathcal{I}(v)\mid\text{scorable}]$, $c{=}4$ & 0.574 & 0.435 & 0.346 & 0.284 \\
\bottomrule
\end{tabular}
\end{table}

The plug-in score is therefore not a zero-centered statistic at finite $k$. The bias decays roughly as $1/k$ but remains non-negligible at $k{=}8$: a node with no true dependence scores $0.124$ bits on average. The bias also grows with the number of outcome classes, reaching $0.346$ bits at $c{=}4$, motivating the coarse per-benchmark projections of Appendix~\ref{app:outcome}. Finally, the null reaches the maximum $1.0$ bits with probability $0.002$: no threshold strictly below the maximum eliminates false positives in this regime, and Proposition~\ref{prop:sep} is correspondingly restricted to the two extremes.

Ranking is more useful than thresholding here. Model the arm-outcome dependence as a binary symmetric channel in which the outcome class matches the arm except with flip probability $\varepsilon$, so the true mutual information is $1-H_2(\varepsilon)$ bits, and compare an independent draw of a dependent node against an independent draw of a null node, each conditioned on its own scorability at $k{=}8$, $m{=}2$. The dependent node receives the strictly higher score with probability $0.995$ at $\varepsilon{=}0$, $0.882$ at $\varepsilon{=}0.125$, $0.692$ at $\varepsilon{=}0.25$, and $0.517$ at $\varepsilon{=}0.375$, where the true information is $0.046$ bits and the comparison approaches chance. The ranking therefore reliably lifts strongly separating nodes above the null at the default budget and degrades gracefully toward chance as the true dependence vanishes.

These results motivate the way SynAct consumes the statistic, as an acquisition score and never as a hypothesis test: selecting the top $B$ nodes compares scores against one another, whereas thresholding at zero would admit the entire null. The bias does vary with the support sizes, at the order of the Miller--Madow correction $(|\mathcal{A}_m(v)|-1)(|\mathcal{O}|-1)/(2N\ln 2)$. A general ranking-recovery guarantee would therefore need a separation condition accounting for this cross-node spread, and we do not claim one. The practical mitigation is a coarse per-benchmark $\omega$, which holds $|\mathcal{O}|$ comparable across nodes by construction. The scorability row also explains the default budget: at $k{=}8$ roughly $93\%$ of two-arm nodes are scorable, against $38\%$ at $k{=}4$ (Appendix~\ref{app:budget}).

\subsection{Assumptions and Scope}
\label{app:scope}
\label{app:proof-scope}

We have so far established statistical identifiability, and neither causal effect nor repairability. The score measures association under the agent's sampling distribution: a hidden variable can induce a high score without the action being responsible, and an outcome-separating decision need not be repairable by a written convention. We use ``identifiability'' narrowly, for whether finite-sample rollouts separate outcome-constant from outcome-separating decisions, a weaker sense than the recovery of latent factors up to ambiguity in representation learning~\citep{liu2026disentangled}, with the intermediate regime characterized by Proposition~\ref{prop:partial} and Appendix~\ref{app:proof-null}.

Two assumptions are load-bearing, the clauses of Assumption~\ref{as:ident}. Determinism is not one of them.

\paragraph{State identifiability.}
We require $\phi$ to distinguish distinct situations without making recurring ones unnecessarily unique. If $\phi$ aliases distinct situations, as under partial observability, the arms pool incomparable decisions and $\mathcal{I}(v)$ measures a mixture. If $\phi$ is over-discriminative, reading timestamps or continuous values, groups become singletons and fail the support floor. We therefore need sufficient recurrence without aliasing, and a learned representation could relax the exact-match form we use.

\paragraph{Outcome identifiability.}
We require $\omega$ to carry no encoding of the branching action: otherwise distinct arms are outcome-disjoint by construction and $\mathcal{I}(v)=\hat{H}(A_v)$, reducing the score to action entropy. We therefore restrict $\omega$ to the terminal state, ensuring that any dependence of $O$ on $A_v$ is mediated by where the run ended. A projection reading the actions a run performed violates this requirement, and one reading the resulting environment state satisfies it.

\paragraph{Determinism.}
We require determinism for neither condition and use it only to simplify implementation: equivalent situations produce identical observations, which lets $\phi$ use exact matching, and retrying without the injected rule reproduces the original failure, giving the gated protocol of \S\ref{sec:gate} its control arm for free. Under stochastic dynamics we would instead use a resample-only control to separate repair from environmental randomness, and the stochasticity making $\mathcal{I}(v)$ informative in our experiments already comes from policy sampling at temperature $0.7$ rather than from the environment.

For a new benchmark, we therefore need to redesign only $\phi$ and $\omega$: $\phi$ must induce recurring decision contexts, and $\omega$ must preserve outcome distinctions without encoding the branching action.

\section{Detailed Experimental Setup}
\label{app:setup}

We provide the configuration needed to reproduce Table~\ref{tab:main}, complementing the summary in \S\ref{sec:setup}.

\subsection{Benchmarks, Splits, and Step Limits}
\label{app:setup-bench}

We summarize the per-benchmark configuration in Table~\ref{tab:setup-bench}. We use exact-match state keys because repeating an action sequence reproduces the observation sequence closely enough to group equivalent situations without a similarity threshold, as Assumption~\ref{as:ident} requires. The $\tau$-bench user simulator is the least controlled case because its replies are model-generated, and Appendix~\ref{app:grouping} reports the resulting grouping statistics. We fix the split seed to 42 so that train and test stay disjoint and reproducible across runs.

\begin{table}[h]
\centering
\small
\setlength{\tabcolsep}{4pt}
\caption{Per-benchmark configuration. \emph{Steps} is the hard cap on agent actions per rollout. \emph{Workers} is the concurrency cap used during evaluation, which is 1 for the Docker-backed shell environment because that environment is not thread-safe.}
\label{tab:setup-bench}
\begin{tabular}{@{}lrrrrl@{}}
\toprule
Benchmark & $n_{\text{train}}$ & $n_{\text{test}}$ & Steps & Workers & Split rule \\
\midrule
InterCode-SQL       & 50 & 150 & 10 & 8 & deterministic shuffle, first 50 train, rest test \\
TextCraft           & 50 & 59  & 20 & 8 & depth-3 recipe seeds shuffled, split in half \\
InterCode-Bash      & 20 & 40  & 10 & 1 & \texttt{fs\_1} shuffled, first 20 train, rest test \\
$\tau$-bench retail & 50 & 115 & 20 & 4 & benchmark's own train/test task split \\
\bottomrule
\end{tabular}
\end{table}

\paragraph{InterCode-SQL.}
We evaluate text-to-SQL on the Spider development set~\citep{yu2018spider} through the InterCode harness~\citep{yang2023intercode} against a local MySQL instance. The agent issues SQL statements, observes result rows or errors, and submits a final query. We count an episode as successful when its result set matches the gold query's. The agent sees the natural-language question plus the table-to-column schema.

\paragraph{TextCraft.}
We evaluate compositional crafting on TextCraft~\citep{prasad2024adapt} at recipe depth 3, meaning the target item requires a three-level crafting tree. The agent issues \texttt{craft} and \texttt{get} commands and observes inventory changes. We count an episode as successful when the target item is present at the end. The prompt lists the full recipe menu and states that only raw leaf items may be obtained with \texttt{get}.

\paragraph{InterCode-Bash.}
We evaluate shell scripting on nl2bash~\citep{lin2018nl2bash} through InterCode, running each task in a Docker container. We count an episode as successful when the submitted command's stdout matches the reference command's output exactly, which makes this the most format-sensitive of the four. Because the container environment is not thread-safe, we run all Bash evaluation single-worker.

\paragraph{$\tau$-bench retail.}
We evaluate tool-agent-user interaction in the retail domain of $\tau$-bench~\citep{yao2024taubench}. The agent calls typed tools against a mock database while conversing with a simulated user, and we count an episode as successful when the final database state is correct and the domain policy wiki is respected. We run the user simulator at temperature 0 so that episodes are reproducible. This is the only benchmark of the four where an action is a structured tool call rather than a text command, and the only one requiring a native tool-calling interface, which is why we do not evaluate the two Gemma models here.

\subsection{Models and Serving}
\label{app:setup-models}

We serve all agent models through Amazon Bedrock: Qwen3-32B~\citep{qwen3}, Gemma-3-12B and Gemma-3-4B~\citep{gemma3}, and Ministral-3-3B~\citep{ministral3}. We use Claude Opus 4.6~\citep{claudeopus} as the expert, called with temperature $0$, a $4000$-token output cap, and extended reasoning enabled. We sample acquisition rollouts at temperature $0.7$ and evaluate at temperature $0$ (greedy), which makes the two attempts of the gated protocol differ only in whether the rule is present. We cap per-step output at $1024$ tokens.

\paragraph{Reasoning mode.}
Qwen3-32B runs in thinking mode, and the trajectory text passed to the gate excludes the reasoning trace so that the gate judges actions rather than deliberation. The other three models have no separate reasoning channel.

\paragraph{Tool routing.}
All four models run on all four benchmarks, with $\tau$-bench tool calls routed natively through each model's tool-calling interface.

\subsection{Acquisition Hyperparameters}
\label{app:setup-hparam}

We use $k{=}8$ rollouts per training task, a minimum arm size of $m{=}2$, and acquire one decision node per task, the highest-scoring one. We vary $k \in \{2,4,6\}$ and the expert budget $B \in \{10\%,25\%,50\%\}$ of MI-ranked decisions in the corresponding ablations, holding everything else fixed. For the online variant, we use a consistency threshold $\theta{=}0.6$ to decide whether a task needs intervention at all, and we threshold $\mathcal{I}(v^\star)$ on the remainder, sweeping its operating range so that the expert-consultation rate spans the reported curve.

We cache rollouts to disk per (benchmark, model, task), letting us resume sweeps without re-sampling and recompute acquisition from a fixed rollout set. Each ablation re-uses the same cached rollouts wherever the setting permits, isolating the acquisition function from sampling variation.

\subsection{Error Gate}
\label{app:setup-gate}

Because we implement the gate as a single call to the \emph{agent} model rather than to the expert, gating adds no expert cost, though it does add one agent call of latency to each failed attempt. We pass it the task description, the first $1500$ characters of the failing trajectory, the antipattern sentence, and the rule, and it answers \texttt{FIRES} or \texttt{SKIP}. We default malformed or failed responses to \texttt{SKIP}: an infrastructure error can only withhold the rule rather than inject it spuriously. We record $n_{\text{fired}}$, the number of tasks reaching the gated retry, alongside each accuracy number.

\subsection{Baseline Configurations}
\label{app:setup-baselines}

We give every baseline the same expert model SynAct uses (\S\ref{sec:setup}), with the expert performing the method's own distillation step. Table~\ref{tab:main} thus compares where expert attention is allocated while holding expert strength fixed. We configure each method by its own protocol: ACE~\citep{zhang2026ace} runs one epoch of reflect-and-curate over the training tasks and injects the resulting playbook always-on, ReasoningBank~\citep{ouyang2026reasoningbank} distils one memory item per training task and retrieves the top $k{=}3$ at test time, AWM~\citep{wang2025awm} induces workflows from successful training trajectories and injects them always-on, and AWM-iter repeats the induction for ten rounds, matching our iteration count. Consequently the expert-turn counts of Table~\ref{tab:cost} scale with the training set for the baselines and with the iteration count for SynAct. We tune only setup parameters, namely training-set size, rollout count, epoch count, retrieval depth, and token budget, and leave each method otherwise unmodified. The self-supervised variants of Appendix~\ref{app:selfjudge}, ACE (val-select) and ReasoningBank (top-1), follow the earlier protocol of Appendix~\ref{app:inflight} and use no expert.

\subsection{Measurement Protocol}
\label{app:setup-protocol}

We report each setting as the mean of 3 independent repeats over the full test set. Each repeat re-runs both attempts of the gated protocol, including the unaided first attempt, re-measuring base accuracy within every repeat and at every iteration. This lets us estimate measurement noise from the experiment itself instead of assuming it: base accuracy moves $3.4$--$8.5$pp across iterations of the same setting. We therefore pool the base over completed iterations when reporting a base column and treat adjacent-iteration differences of that magnitude as indistinguishable at 3 repeats.

We generate per-setting numbers with a script that reads the run directory and refuses to print an entry lacking all 3 repeats: a partially-evaluated result cannot enter a table, and no number is transcribed by hand. We also record paired flip counts (win$\rightarrow$lose and lose$\rightarrow$win) per setting, because aggregate accuracy hides churn in which a method breaks as many tasks as it fixes.

\section{State-Keyed Grouping}
\label{app:grouping}

Acquisition compares arms only when they represent the same decision, and the grouping key determines which actions SynAct treats as alternatives from a shared state. We describe the key used in our experiments and the approximation required by our cached rollout format.

\paragraph{The key.}
We define a decision node as a maximal set of (run, step) pairs sharing a state key. The intended key is a normalized hash of the observation at $s$, canonicalized per benchmark by lower-casing, collapsing whitespace, and dropping volatile fields such as step counters, as constrained by Assumption~\ref{as:ident}. Our cached rollouts, however, record actions rather than full observations at every step, and the implementation keys each depth by the ordered action prefix of the run. An order-invariant alternative keys on the prefix multiset ${!{a_0,\dots,a_{d-1}}!}$, additionally merging runs that reach one state through different action orders. We audit this alternative below and find that its extra merges are usually illegitimate, precisely where they occur most often.

\paragraph{Why not index by position.}
A position index treats step $d$ of every run as the same decision even when runs have reached different states, which happens routinely once a trajectory takes an extra exploratory action. Recovering decision context from position alone therefore requires the action history to determine the state, whereas state keying makes the context explicit. The tradeoff runs in the other direction as well: an over-discriminative $\phi$ fragments a node into singletons, which the support floor then discards. The key thus trades coverage against the risk of aliasing distinct decisions (Appendix~\ref{app:scope}).

\paragraph{Collision audit.}
Does the ordered-prefix key alias distinct decisions, or split decisions it should merge? We re-group the cached $k{=}8$ training rollouts from all 16 settings under the multiset key and compare the resulting groups. Merges are structurally common in the flexible-order benchmarks: $1{,}943$ of $6{,}536$ depth groups merge more than one ordered prefix. Yet the acquired top-1 set barely changes: 28 of 565 routed tasks change ($5.0\%$), and the per-setting overlap is at least $0.92$ in 15 of 16 settings. To determine whether these extra merges are genuine, we replay the merged branches to their reached states and ask Claude Opus 4.6 whether they face the same decision. On SQL, every sampled merge is legitimate (0 of 19 judged distinct). On TextCraft, however, $90\%$ conflate genuinely distinct decisions, different crafting sub-goals reached through different action multisets, with $\tau$-bench between the two at $17.5\%$. The ordered-prefix key is therefore the correct default: where action order carries state, the order-invariant merge creates exactly the aliasing that Assumption~\ref{as:ident} rules out.

\section{Outcome Projections \texorpdfstring{$\omega$}{omega}}
\label{app:outcome}

The acquisition signal needs an outcome representation that distinguishes the consequences of different actions without consulting a gold answer. We therefore project each run's terminal state into a benchmark-specific outcome class using only its own trajectory; two runs share a class when they end in the same projected state.

\begin{itemize}
\itemsep0.1em
\item \textbf{InterCode-SQL}: the last executed \texttt{SELECT}/\texttt{WITH} statement, lower-cased with whitespace collapsed. Two runs submitting the same query share an outcome.
\item \textbf{TextCraft}: the sorted set of \texttt{craft}/\texttt{get} targets the run ended with, representing what it actually made, computed order-invariantly.
\item \textbf{InterCode-Bash}: the final submitted command, normalized and excluding pure navigation (\texttt{cd}, \texttt{ls}). Bash success is determined by the command's output, making the command the closest no-gold proxy for the terminal state.
\item \textbf{$\tau$-bench retail}: the multiset of write tool calls made by the run, computed order-invariantly, since $\tau$-bench success is determined by the database end state.
\end{itemize}

\paragraph{How sensitive is acquisition to the choice of $\omega$?}
We coarsen and refine each projection and rescore the cached rollouts. Refining is nearly rank-preserving on SQL (Spearman $0.98$ to $1.0$ against the default) and mildly destabilizing on TextCraft, whereas coarsening destabilizes both directions of the pipeline: on SQL, it collapses distinct queries into one class and shrinks the routed set from $43$ to $18$ tasks for Qwen; on TextCraft, it scrambles the ranking (Spearman $0.46$ to $0.52$). Rank movement does not translate into end-to-end damage, however. Authoring one iteration under the coarse projection is competitive-to-better on all four re-run settings, and on TextCraft the coarse variant lifts more than the default ($+17.2$ vs.\ $+13.8$pp for Qwen, $+12.1$ vs.\ $+6.9$pp for Gemma-12B) at matched coverage. Grouping more divergent endings per mode lets the expert surface an additional valid convention. The default projection thus sits in a usable middle, and the authored rule tolerates substantial variation in granularity.

\paragraph{Sensitivity to projection granularity.}
We further examine how the granularity of $\omega$ affects acquisition by rescoring the cached rollouts under a coarser projection (the table set touched on SQL, the count of craft actions on TextCraft) and a finer one (the raw un-normalized terminal query, the full ordered action sequence). Refining is nearly rank-preserving on SQL (Spearman $\rho \ge 0.98$ against the default) and mildly destabilizing on TextCraft ($\rho$ of $0.62$ to $0.84$), while coarsening moves rankings more, shrinking the routed set on SQL from 43 to 18 tasks and dropping $\rho$ to $0.46$ on TextCraft. Yet these ranking changes do not translate into end-to-end damage: authoring and evaluating under the coarse $\omega$ matches the default on SQL and improves TextCraft for both swept models ($+17.2$ vs.\ $+13.8$pp for Qwen, $+12.1$ vs.\ $+6.9$pp for Gemma-12B at one iteration). Grouping more divergent endings into the same class gives the expert more evidence per mode. The default $\omega$ therefore lies in a forgiving region, while coarser projections remain a defensible, cheaper setting for free-form action tasks.

\section{Worked Example}
\label{app:example}

What does the expert see after SynAct identifies a consequential decision? We trace one iteration on TextCraft with Qwen3-32B. We draw $k{=}8$ rollouts on each of the 50 training tasks, group their steps by state key, and score each node by Eq.~\ref{eq:mi}. Keeping the highest-scoring node per task yields 49 scorable tasks, including 20 from tasks where no rollout succeeded and which a success-based score would have discarded (Proposition~\ref{prop:allfail}).

\paragraph{The evidence.}
We compress each acquired node into one digest block holding the task goal, two arms whose end states differ, the diverging action and the trajectory that followed for each, and a \texttt{correct}/\texttt{wrong} tag from that arm's success flag. Acquisition itself reads no labels, and the tags enter only here. The digest contains qualitatively different kinds of evidence.
\begin{itemize}
  \itemsep0.15em
  \item A task where both arms succeed by different routes, one reaching \texttt{craft 2 lime dye} through \texttt{green dye} and the other through \texttt{bone meal}. The expert sees two valid behaviors rather than a single canonical route.
  \item A task where both arms fail differently: one issues \texttt{craft 1 tripwire hook} before its inputs exist, and the other loops \texttt{craft 4 stick using 2 planks} six times.
  \item An \texttt{ALL-FAIL} task carrying two distinct wrong end states, one repeating \texttt{craft 1 granite using 1 diorite, 1 quartz} without acquiring the inputs. No arm is labeled correct, and the divergent failures are the only evidence available. This is precisely the case in which the success bit is silent.
\end{itemize}

\paragraph{The rule the expert returns.}
From all 49 blocks in one call the expert produces the following convention, reproduced verbatim as the agent receives it.
\begin{promptbox}[Convention authored by the expert]
- Before issuing any `craft' command, verify that ALL required input materials are already in your inventory; if not, first `get' or craft the missing prerequisites.\\
- At the start of a task, trace the full recipe tree to leaf (raw) materials and compute the TOTAL quantity of each needed; acquire all base materials before beginning the crafting chain.\\
- Never repeat the same craft or get command more than twice in a row; if it fails or yields no progress, re-check inventory and revise your plan.\\
- Work bottom-up: craft intermediate ingredients first, then combine them into the final product.
\end{promptbox}
\begin{promptbox}[Antipattern that gates it]
Attempting to craft items without first ensuring all input materials are present in inventory, leading to failed crafts, wasted steps, and infinite retry loops.
\end{promptbox}

\paragraph{What the expert abstracts.}
The four bullets cover distinct failure modes in the acquired evidence. The first addresses crafting before acquiring ingredients, the third addresses the repeated \texttt{craft 4 stick} loop, and the second and fourth generalize the two successful routes into a plan-then-execute ordering. Crucially, the convention names none of the concrete items that appeared in training and therefore transfers to recipe trees the training tasks never contained. A per-episode repair cannot produce this abstraction, since counting modes across blocks is what identifies which failures are systematic. Prepending these four bullets behind the gate lifts accuracy in this setting by $14.2$pp over base.

\paragraph{Application.}
The resulting convention is not applied blindly. At test time the task first runs without it at temperature $0$, and a successful attempt terminates immediately. After a failure, the gate reads the trajectory against the antipattern, and only when it fires do we prepend the convention and retry once.

\section{Rollout and Expert Budget Sensitivity}
\label{app:budget}

SynAct spends two distinct budgets: rollouts determine how much behavioral variation we observe, and the expert budget determines how much of that variation we ask the expert to explain. We vary each independently to measure how quickly the signal saturates.

\begin{table}[t]
\centering
\small
\setlength{\tabcolsep}{5pt}
\caption{\textbf{The two budget knobs.} Accuracy (\%, 3-repeat mean) when cutting rollout count $k$ ($k{=}8$ is the full method) and expert budget $B$, the top $B\%$ of MI-ranked decisions ($B{=}100\%$ is full). Each panel carries its own \emph{base}, pooled over that panel's settings because the gate re-measures the unaided agent in every run, and bold marks the best setting per row. In the mean-gain row, --- marks the base columns, which have no gain by definition. All 36 results are complete at 3 repeats.}
\label{tab:budget}
\begin{tabular}{@{}ll rrrr rrrr@{}}
\toprule
\multicolumn{2}{@{}l}{} & \multicolumn{4}{c}{Rollouts $k$} & \multicolumn{4}{c}{Budget $B$} \\
\cmidrule(lr){3-6}\cmidrule(l){7-10}
Bench & Model & base & 2 & 4 & 6 & base & 10\% & 25\% & 50\% \\
\midrule
\multirow{3}{*}{SQL}
 & Q    & 65.5 & 69.3 & 69.6 & \textbf{70.4} & 65.9 & 70.9 & \textbf{71.1} & 66.7 \\
 & M    & 51.6 & 63.6 & 63.8 & \textbf{66.0} & 49.9 & \textbf{64.4} & 62.4 & 61.6 \\
 & 12B  & 63.9 & 66.9 & 66.4 & \textbf{68.0} & 63.5 & 63.3 & \textbf{66.4} & 66.2 \\
\midrule
\multirow{3}{*}{TC}
 & Q    & 41.1 & 52.0 & 56.5 & \textbf{57.1} & 43.5 & 59.9 & \textbf{61.6} & 54.2 \\
 & M    & 12.2 & 16.9 & \textbf{18.6} & 11.9 & 12.2 & 16.4 & \textbf{16.9} & 14.7 \\
 & 12B  & 16.8 & 28.2 & 24.3 & \textbf{31.1} & 19.0 & 21.5 & 31.1 & \textbf{31.6} \\
\midrule
\multicolumn{2}{@{}l}{\emph{mean gain over base}} & --- & $+7.6$ & $+8.0$ & $\mathbf{+8.9}$ & --- & $+7.1$ & $\mathbf{+9.2}$ & $+6.8$ \\
\bottomrule
\end{tabular}
\end{table}

\paragraph{How many rollouts do we need?}
Equation~\ref{eq:mi} requires $m{=}2$ runs per arm before a node becomes scorable, but additional rollouts can expose arms that a smaller budget misses and make the estimate more stable. We therefore sweep $k \in \{2,4,6\}$ against the $k{=}8$ default (Table~\ref{tab:budget}, left). All six settings already beat their base at $k{=}2$, a $4\times$ cheaper collection, and this small budget often captures most of the gain available at $k{=}6$: Ministral SQL gains $+12.0$pp at $k{=}2$ against $+14.4$pp at $k{=}6$ over a pooled base of $51.6$. Accuracy rises from $k{=}2$ to $k{=}6$ in 5 of 6 settings, and the exception, Ministral TextCraft ($16.9/18.6/11.9$ on a $12.2$ base), is the noisiest setting in the grid. Since $k$ multiplies the dominant collection cost, these results favor a small rollout budget: coarse behavioral divergence carries most of the useful signal without requiring a precise probability estimate.

\paragraph{How much expert attention is enough?}
The ranking lets us truncate the expert budget without changing the acquisition rule: we expose only the top $B\%$ of MI-ranked decisions. The top decile already gains $+7.1$pp over base on average and the quarter-budget reaches $+9.2$pp, which on Qwen SQL gives $71.1$ against the full-budget $68.9$ of Table~\ref{tab:main}. The leading decisions therefore contain most of the dominant failure mode, making much of the remaining budget redundant.

More budget does not always help. At $B{=}50\%$ the average gain falls to $+6.8$pp, below the quarter-budget on 5 of 6 settings. A larger digest contains more heterogeneous failure modes while the expert still compresses the evidence into a single convention, and beyond a point the rule becomes less focused rather than more complete. The exception, Gemma-12B TextCraft at $31.6$ against $31.1$, is a $0.5$pp difference on a 59-task set and does not separate at 3 repeats.

The two sweeps share a pattern. We do not need exhaustive behavioral coverage to obtain useful feedback: a handful of rollouts exposes enough divergent behavior to rank decisions, and the highest-ranked decisions carry enough shared structure for the expert to form a useful convention. SynAct therefore spends compute on discovering where behavior diverges and spends expert attention on the most concentrated part of that divergence.

\paragraph{Where does the rollout budget actually bind?}
A full ten-iteration training sweep on TextCraft with Qwen sharpens the picture: $k{=}2$ degenerates completely, exposing no within-state divergence, surfacing zero branches, and authoring no rule at any iteration. In contrast, $k \in {4,8,16}$ all learn, with iteration-10 lifts of $+20.7$, $+27.6$, and $+12.1$pp, respectively, within single-repeat noise. The sharp dependence is therefore the $k{=}2$ cliff, not the exact budget above it, consistent with the recovery probabilities in Proposition~\ref{prop:recovery}: the signal needs enough rollouts to observe self-disagreement at all, then saturates quickly. Throttling the per-iteration expert budget to the top $25\%$ or $50\%$ of routed tasks over the same sweep retains most of the full-budget lift ($+20.7$ and $+17.2$ vs.\ $+27.6$pp).

\paragraph{Sensitivity to the support floor.}
Rescoring the cached rollouts at $m{=}1$ and $m{=}3$ shows that the default $m{=}2$ sits at a knee. Dropping to $m{=}1$ adds acquisitions in only 4 of 16 settings, almost all on the weakest model, whereas raising to $m{=}3$ costs coverage in 10 of 16, collapsing Ministral's TextCraft set from 37 to 9 tasks. Running $m{=}1$ authoring end to end on the two most affected settings never hurts and helps TextCraft with Ministral ($+5.2$ vs.\ $+1.7$pp at the same firing set), because several singleton arms carry verifiably correct recipe facts that the floor would otherwise discard. The floor is thus a safe default with a real coverage cost for weak agents, matching the coverage tradeoff in Proposition~\ref{prop:recovery}.

\paragraph{Does the acquisition ranking itself carry the gain?}
At one iteration under the final protocol, swapping the selection rule while holding the digest format, authoring call, and gate fixed yields statistically indistinguishable full-set gains across all 16 settings ($+6.6$ to $+7.6$pp), with per-setting winners spread across all four signals. The explanation is visible in the authored rules: when one failure mode dominates a benchmark, every selector over branching points surfaces it, and on SQL all four signals converge on the same convention. The value of outcome information therefore concentrates where the theory predicts: all-fail tasks that success-based selection cannot see (\S\ref{sec:res-signal}) and per-task routing at inference time (Appendix~\ref{app:signal-routing}). The end-to-end training loop is nevertheless robust to the precise ranking, lowering the cost of adopting the method.

\section{Coverage of Acquisition Signals}
\label{app:coverage}

How much of the gain comes from knowing \emph{where} to ask the expert? The clean test holds the rollouts, authoring call, and gate fixed and varies only the selection rule, comparing outcome information against random selection over decision nodes, action entropy, and a success-based score (Table~\ref{tab:signal}, under the single-iteration protocol of Appendix~\ref{app:inflight}). Outcome-MI is best in all 8 settings and all 23 pairwise comparisons, improving over base by $+12.5$pp on average against $+8.2$pp for random, $+6.9$pp for success-MI, and $+6.8$pp for entropy. Random selection is therefore a strong baseline, since a strong expert helps wherever it is pointed, and the additional margin is what pointing it at the right decisions buys. More strikingly, entropy trails random: disagreement alone spends the budget on cosmetic variation that does not separate outcomes, precisely the distinction Proposition~\ref{prop:sep} draws. Success-MI has a different failure mode: where no task contains both a winning and a losing rollout, it has nothing to acquire.

The gap lies in coverage as well as ranking quality. Accuracy in Table~\ref{tab:signal} is defined only where a signal produces evidence, and Clause 1 of Proposition~\ref{prop:sep} predicts where each signal goes silent. Success-MI requires \emph{both} a winning and a losing rollout, and on Gemma-12B Bash zero mixed contrasts occur, no rule is authored, and accuracy stays at the $19.2$ base. Outcome-MI carries no such requirement: accuracy in the same setting reaches $31.7$ ($+12.5$pp), and the signal acquires $1.9\times$ more decisions overall ($274$ against $146$ on one SQL model). This separates two properties of an acquisition signal: \emph{quality}, how well it ranks the evidence available to it, and \emph{coverage}, how often it produces evidence at all. Under a fixed expert budget the latter binds. A signal that goes silent on difficult tasks does not merely rank poorly: it leaves its budget unused exactly where feedback is most needed.

\section{Scope of the Acquisition-Function Comparison}
\label{app:inflight}

Table~\ref{tab:signal} gives the full acquisition-function sweep on InterCode-SQL and TextCraft. We restrict the comparison to these two benchmarks because holding the rollouts, the authoring call, and the gate fixed while varying only the selection rule costs one complete training run per rule per setting. The resulting comparison is broad across models and narrower across environments: the claim that outcome information beats random, entropy, and success-based selection rests on 8 settings across two benchmarks.

This scope also explains why the base rows differ from Table~\ref{tab:main}. The sweep predates the equal-expert protocol of \S\ref{sec:res-main}, leaving its absolute values incomparable across tables. Within each setting, however, the four selection rules share the same rollouts, expert, and gate, which isolates the acquisition function. The same provenance applies to the rollout and budget sensitivity grid of Table~\ref{tab:budget}. The headline training results, the iteration curves of Table~\ref{tab:curve}, and the routing results of Table~\ref{tab:online} all come from the final protocol.

\section{Why Self-Supervised Curation Saturates}
\label{app:selfjudge}
Can stronger curation close the gap without changing the source of supervision? We compare each baseline against a more selective variant under the single-iteration protocol of Appendix~\ref{app:inflight}: \textbf{ACE (val-select)} deploys its playbook only when it beats the empty playbook on a held-out split, and \textbf{ReasoningBank (top-1)} retrieves at most one memory. Neither changes the basic picture. ACE and ReasoningBank land within noise of base about as often as they improve on it, with a negative aggregate effect on the two weakest models on TextCraft, and the protective variants mostly trade failure for stagnation: ACE (val-select) often declines to deploy, and ReasoningBank (top-1) stays flat. The limiting factor is therefore deeper than curation, since a weak agent's traces record what it did without recording why it was wrong.

Only two settings favor a baseline, both on Gemma-3-4B: ACE on Bash ($23.3$ against $20.0$, a margin of $1.3$ tasks) and ReasoningBank (top-1) on TextCraft ($4.0$ against $2.8$), both near the floor where under one task separates the methods. The result points to a limitation of self-supervised experience distillation: selecting better memories cannot recover information the generating agent never possessed.

\section{Error Gate Ablation}
\label{app:gate-detail}

\paragraph{Gating against unconditional injection.}
Does the non-regression come from the authored rule or from the gate that decides when to apply it? Holding the rule fixed and changing only its application answers this directly. Under the single-iteration protocol of Appendix~\ref{app:inflight}, unconditional injection produces 32 to 36 win$\rightarrow$lose flips on SQL against $0$ when gated, and Table~\ref{tab:main} shows the same pattern in aggregate, with always-on baselines regressing on several settings while SynAct regresses on none. Flip counts expose the effect more clearly than accuracy alone, since equal numbers of repairs and regressions cancel in the aggregate. The gate thus provides deployment-time protection without a held-out validation split: when the unaided attempt succeeds, the rule is withheld and cannot undo that success.

\paragraph{Is the served gate faithful?}
The gate is a single call to the agent model, and we measure how well it approximates expert judgment. On sampled failing trajectories, we compare the served gate's firing decision against an independent Claude Opus 4.6 judgment of whether the failure exhibits the antipattern. On TextCraft, Bash, and $\tau$-bench, the gate reaches recall $1.0$, firing on every failure flagged by the expert judge, at precision $0.58$ to $0.86$. SQL is the conservative outlier, with recall $0.22$ at precision $0.50$, because its failures are heterogeneous and the per-item selector often abstains. Abstention is cheap, however: force-injecting the convention into gate-declined failures repairs only $15$ to $21\%$ of them: most skipped failures would not have been fixed by the rule anyway. Firings on unrepairable failures waste a retry, $48$ to $84\%$ of firings depending on the benchmark, but because the gate is consulted only after a failure, this waste costs compute rather than accuracy: win$\rightarrow$lose churn is $0$ in every setting.

\paragraph{Does gating alone explain the gap?}
A sharper decomposition redeploys each baseline's own Opus-authored library behind SynAct's gate, changing only the deployment protocol. Gating the ACE and AWM libraries recovers roughly half of SynAct's full-set gain ($+4.6$ and $+5.0$pp vs.\ $+8.2$ over the 12 swept settings), showing that selective application is a substantial transferable source of benefit. The remainder comes from the knowledge itself: SynAct's rule repairs $18\%$ of the failures it fires on, versus $11\%$ for either gated baseline library. On InterCode-Bash, where the baseline libraries rarely trigger the gate, targeted acquisition dominates ($+11.7$ vs.\ $+3.1$ and $+2.5$pp). Both components matter, but their relative contribution varies by benchmark.

\paragraph{Does the author's strength matter?}
Swapping the authoring expert from Claude Opus 4.6 to Claude Sonnet 4.6, while holding the digest, gate, and evaluation fixed, lowers the mean firing-set lift from $+8.8$ to $+4.9$pp over 8 settings, with Opus ahead or tied in 7. The gap concentrates on the hardest benchmark: on Bash with Qwen, Sonnet authored a convention prescribing exactly the buggy aggregation that the evidence warned against, and the gate never matched its antipattern to a failure, and the rule silently deactivated. Author quality therefore matters, and a bad rule fails safely through inaction rather than causing damage.

\section{Accuracy Across Acquisition Iterations}
\label{app:iter}

\paragraph{Does more feedback keep helping?}
A procedure that rewrites its own context on each pass can improve in two ways: later iterations may uncover new failure modes, but they may also compound mistakes from earlier ones. We therefore run ten iterations per setting, re-rolling the training set under the active rule, and evaluate each iteration's rule independently (Table~\ref{tab:curve}, Figure~\ref{fig:train-curves}). The resulting curves are not uniformly front-loaded. The first iteration already gives $+11.4$pp on Qwen TextCraft, $+15.8$pp on Ministral SQL, and $+15.4$pp on Qwen Bash, and it captures at least $70\%$ of the best-iteration gain in half the settings. Others continue to improve substantially: Qwen TextCraft rises from $+11.4$pp at iteration~1 to $+20.6$pp at iteration~6, and Qwen $\tau$-bench from $+10.3$pp to $+15.2$pp. The distinction is intuitive: one convention can repair a dominant failure mode, whereas several modes require several passes. Since each iteration costs one expert call, iteration count is therefore the more attractive knob for increasing expert compute.

\paragraph{Does iteration introduce drift?}
The curves show little evidence that successive rules accumulate damage. Across the $160$ iteration evaluations in Table~\ref{tab:curve}, exactly one falls below its own base, Gemma-12B Bash at iteration~3 ($23.3$ against a $23.5$ base), a $0.2$pp dip within noise on a 40-task set. The gate provides a simple explanation for this stability: when a task does not exhibit the antipattern, the rule is withheld, and an unhelpful convention cannot touch a solved attempt.

\begin{figure}[h]
\centering
\includegraphics[width=\linewidth]{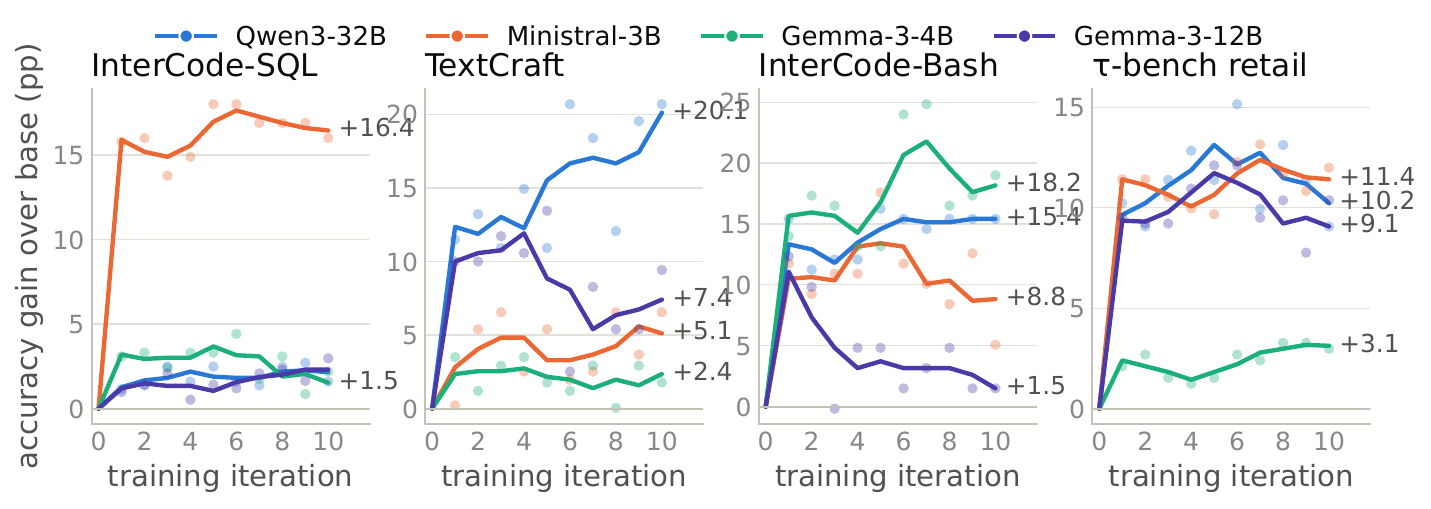}
\caption{\textbf{Gain over the re-measured greedy base per training iteration}, one panel per benchmark and one line per agent model. Solid lines are 3-iteration moving averages and faded dots are raw 3-repetition means.}
\label{fig:train-curves}
\end{figure}

\section{Full Per-Iteration and Per-Acquisition Tables}
\label{app:tables}

\paragraph{Statistical reporting.}
The test sets are small enough that a single task visibly moves a setting's accuracy: one task changes accuracy by $0.7$pp on InterCode-SQL ($n{=}150$), $1.7$pp on TextCraft ($n{=}58$), $2.5$pp on InterCode-Bash ($n{=}40$), and $0.9$pp on $\tau$-bench ($n{=}115$). We therefore flag near-floor comparisons where fewer than one task separates methods, treat each model--benchmark setting as the paired unit, and use exact one-sided sign tests across the 16 settings. SynAct at one iteration beats base in all 16 settings ($p=1.5\times 10^{-5}$), each individual baseline in 14--15 settings ($p\le2.1\times10^{-3}$), and the per-setting oracle in 14 settings ($p=2.1\times10^{-3}$). Routing shows the same pattern: SynAct-rules wins all 16 settings against base and cascade ($p=1.5\times10^{-5}$), 15 against SR(10) ($p=2.6\times10^{-4}$), 14 against the judge ($p=2.1\times10^{-3}$), and 12 against action entropy ($p=1.8\times10^{-2}$). Because sign tests use only win counts, they are conservative and avoid conflating the different scales of accuracy changes across benchmarks. Within each setting, exact McNemar tests on paired per-task outcomes provide the same picture. Across 63 single-realization comparisons of SynAct's best iteration against base and the three baselines, 31 are significant at $p<0.05$ and 37 have bootstrap $95\%$ confidence intervals strictly above zero. Mean advantages range from $+9.8$ to $+12.6$pp, with only three negative point estimates, none significant. The largest gains are likewise robust: Bash with Gemma-4B reaches $+35.0$pp against ACE, with 14 discordant wins versus 0 losses, while the strongest-base settings appropriately straddle zero.

\begin{table}[h]
\centering
\caption{\textbf{Training-time methods with an equal expert}, test accuracy (\%, 3-repeat mean), referenced from \S\ref{sec:res-main}. The shaded rows are ours. \emph{base} for SynAct is pooled over its complete iterations, and each baseline reports its own paired greedy base. SynAct appears after one training iteration and at its best iteration, with the best iteration in parentheses. Bold marks the best learning method per column and underline the second best.}
\label{tab:main}
\small
\setlength{\tabcolsep}{5pt}
\begin{tabular}{l r r r r}
\toprule
Method & Qwen3-32B & Ministral-3B & Gemma-4B & Gemma-12B \\
\midrule
\multicolumn{5}{@{}l}{\textit{InterCode-SQL ($n{=}150$)}}\\
base & 66.8 & 50.9 & 47.6 & 64.6 \\
ACE & 66.7 & 47.3 & 48.0 & 64.7 \\
ReasoningBank & 65.3 & 50.7 & 30.0 & \underline{67.3} \\
AWM & 66.7 & 56.7 & 34.0 & 65.3 \\
AWM-iter & 66.7 & 56.7 & 38.0 & 64.0 \\
\rowcolor{rowhl}
SynAct (1 iter) & \underline{68.0} & \underline{66.7} & \underline{50.7} & 65.6 \\
\rowcolor{rowhl}
SynAct (best iter) & \textbf{69.6} (9) & \textbf{68.9} (5) & \textbf{52.0} (6) & \textbf{67.6} (10) \\
\addlinespace[2pt]
\multicolumn{5}{@{}l}{\textit{TextCraft ($n{=}58$)}}\\
base & 46.6 & 13.0 & 2.8 & 19.3 \\
ACE & 55.2 & 15.5 & \underline{3.4} & 22.4 \\
ReasoningBank & 56.9 & 15.5 & \underline{3.4} & 15.5 \\
AWM & 44.8 & \textbf{20.7} & \underline{3.4} & 17.2 \\
AWM-iter & 46.6 & 15.5 & \underline{3.4} & 15.5 \\
\rowcolor{rowhl}
SynAct (1 iter) & \underline{58.0} & 13.2 & \textbf{6.3} & \underline{29.3} \\
\rowcolor{rowhl}
SynAct (best iter) & \textbf{67.2} (6) & \underline{19.5} (3) & \textbf{6.3} (1) & \textbf{32.8} (5) \\
\addlinespace[2pt]
\multicolumn{5}{@{}l}{\textit{InterCode-Bash ($n{=}40$)}}\\
base & 32.1 & 8.2 & 15.2 & 23.5 \\
ACE & 32.5 & 15.0 & 12.5 & \underline{22.5} \\
ReasoningBank & 42.5 & 17.5 & 15.0 & 20.0 \\
AWM & 27.5 & 10.0 & 12.5 & 20.0 \\
AWM-iter & 27.5 & 12.5 & 25.0 & \underline{22.5} \\
\rowcolor{rowhl}
SynAct (1 iter) & \underline{47.5} & \underline{20.0} & \underline{29.2} & \textbf{35.8} \\
\rowcolor{rowhl}
SynAct (best iter) & \textbf{48.3} (5) & \textbf{25.8} (5) & \textbf{40.0} (7) & \textbf{35.8} (1) \\
\addlinespace[2pt]
\multicolumn{5}{@{}l}{\textit{$\tau$-bench retail ($n{=}115$)}}\\
base & 27.4 & 25.7 & 5.4 & 13.1 \\
ACE & 24.3 & 27.0 & 2.6 & 19.1 \\
ReasoningBank & 26.1 & 20.9 & 6.1 & 19.1 \\
AWM & 31.3 & 24.3 & 5.4 & 13.0 \\
AWM-iter & 24.3 & 21.7 & 5.2 & 15.7 \\
\rowcolor{rowhl}
SynAct (1 iter) & \underline{37.7} & \underline{37.1} & \underline{7.5} & \underline{22.6} \\
\rowcolor{rowhl}
SynAct (best iter) & \textbf{42.6} (6) & \textbf{38.8} (7) & \textbf{8.7} (8) & \textbf{25.2} (6) \\
\bottomrule
\end{tabular}

\smallskip
\end{table}

This appendix provides the two full grids summarized in the body: the acquisition-function comparison of \S\ref{sec:res-signal} (Table~\ref{tab:signal}) and the per-iteration curve of Appendix~\ref{app:iter} (Table~\ref{tab:curve}).

\paragraph{What makes the acquisition signal useful?}
Table~\ref{tab:signal} holds everything but the selection rule fixed, isolating the ranking from the rest of the pipeline. Outcome-MI wins every column, and the alternatives are informative in different ways. Random selection reaches $+8.2$pp over base because the expert is useful wherever it is pointed: a randomly chosen decision still lands in a real failing trajectory, and the digest still describes something worth fixing. Action entropy reaches only $+6.8$pp despite deliberately preferring variable decisions, and Proposition~\ref{prop:sep} explains the gap: entropy cannot distinguish a cosmetic decision from an outcome-separating one when their arm counts agree. Part of the budget therefore buys blocks whose divergent arms reach the same end state, and the expert must then explain variation that the outcome does not resolve. Ranking on the wrong quantity can be worse than not ranking at all.

Success-MI reaches $+6.9$pp and exposes a different limitation. It fails outright on Gemma-3-4B TextCraft, where \emph{n/a} means no task contains both a winning and a losing rollout: under $\omega{=}\mathrm{succ}$ those nodes are outcome-constant, the score is identically zero by Clause 1, and the budget goes unspent. Outcome-MI retains information whenever different actions lead to distinguishable outcomes, including when every rollout fails.

\paragraph{What do successive iterations learn?}
Table~\ref{tab:curve} evaluates each iteration's rule independently rather than committing to a schedule in advance, which lets a row be read as a curve. Two regimes recur across the sixteen rows. When one convention captures the dominant failure mode, performance peaks early and plateaus, as on Qwen Bash, which gains $+15.4$pp at iteration~1 and holds within a point of that level. When training tasks contain several distinct modes, later iterations continue to add value: Qwen TextCraft climbs from $58.0$ at iteration~1 to $67.2$ at iteration~6. Only one row peaks at iteration~10: a fixed schedule would either stop before the peak or spend expert calls after it, which motivates evaluating each iteration's rule separately. The \emph{base} column moves across iterations because the gate re-measures the unaided agent each time, and we pool those measurements and treat differences of a few points as inseparable at 3 repeats.

\begin{table}[h]
\centering
\small
\caption{\textbf{Acquisition-function comparison} (accuracy \%, 3-repeat mean), referenced from \S\ref{sec:res-signal}. Everything but the selection rule is held fixed, using the same rollouts, the same authoring call, and the same gate. \emph{n/a} means the signal acquired too little usable evidence to author a rule at all, a failure mode unique to the success-based score, which requires a task to contain both a winning and a losing rollout. The shaded row is ours, and bold marks the best rule per column.}
\label{tab:signal}
\setlength{\tabcolsep}{5.5pt}
\begin{tabular}{@{}llrrrr@{}}
\toprule
Benchmark & Acquisition & Qwen3-32B & Ministral-3-3B & Gemma-3-4B & Gemma-3-12B \\
\midrule
\multirow{5}{*}{SQL}
 & base                       & 64.4 & 49.1 & 45.8 & 64.9 \\
 & SynAct\nobreakdash-Rand (random)         & 69.3 & 64.9 & 50.0 & 67.3 \\
 & SynAct\nobreakdash-Ent ($H(A)$)         & 69.3 & 62.2 & 51.3 & 65.6 \\
 & SynAct\nobreakdash-Succ ($I(A;\mathrm{succ})$) & 68.2 & 61.3 & 49.6 & 65.3 \\
 \rowcolor{rowhl}
 & SynAct ($I(A;O)$)       & \textbf{75.6} & \textbf{65.1} & \textbf{52.9} & \textbf{71.3} \\
\midrule
\multirow{5}{*}{TextCraft}
 & base                       & 41.2 & 13.0 & 1.7 & 17.5 \\
 & SynAct\nobreakdash-Rand (random)        & 58.8 & 20.9 & 3.3 & 28.8 \\
 & SynAct\nobreakdash-Ent ($H(A)$)         & 57.6 & 16.9 & 4.5 & 24.3 \\
 & SynAct\nobreakdash-Succ ($I(A;\mathrm{succ})$) & 50.8 & 20.9 & \emph{n/a} & 28.2 \\
 \rowcolor{rowhl}
 & SynAct ($I(A;O)$)       & \textbf{67.2} & \textbf{27.1} & \textbf{6.2} & \textbf{32.2} \\
\midrule
\end{tabular}
\end{table}

\begin{table}[h]
\centering
\caption{\textbf{SynAct test accuracy (\%) at each training iteration} (3-repeat mean), grouped by benchmark with one row per agent model, referenced from \S\ref{sec:res-budget} and Appendix~\ref{app:iter}. The \emph{base} column pools the unaided measurement over the completed iterations, because the gate re-measures the base each iteration.}
\label{tab:curve}
\footnotesize
\setlength{\tabcolsep}{4pt}
\begin{tabular}{l l r rrrrrrrrrr}
\toprule
Benchmark & Model & base & 1 & 2 & 3 & 4 & 5 & 6 & 7 & 8 & 9 & 10 \\
\midrule
\multirow{4}{*}{InterCode-SQL} & Qwen3-32B & 66.8 & 68.0 & 68.2 & 69.3 & 68.4 & 69.3 & 68.4 & 68.2 & 69.3 & 69.6 & 68.4 \\
 & Ministral-3-3B & 50.9 & 66.7 & 66.9 & 64.7 & 65.8 & 68.9 & 68.9 & 67.8 & 67.8 & 67.8 & 66.9 \\
 & Gemma-3-4B & 47.6 & 50.7 & 50.9 & 50.0 & 50.9 & 50.9 & 52.0 & 49.3 & 50.7 & 48.4 & 49.8 \\
 & Gemma-3-12B & 64.6 & 65.6 & 66.0 & 66.7 & 65.1 & 66.0 & 65.8 & 66.7 & 66.9 & 66.2 & 67.6 \\
\midrule
\multirow{4}{*}{TextCraft} & Qwen3-32B & 46.6 & 58.0 & 59.8 & 57.5 & 61.5 & 57.5 & 67.2 & 64.9 & 58.6 & 66.1 & 67.2 \\
 & Ministral-3-3B & 13.0 & 13.2 & 18.4 & 19.5 & 15.5 & 18.4 & 14.9 & 15.5 & 19.5 & 16.7 & 19.5 \\
 & Gemma-3-4B & 2.8 & 6.3 & 4.0 & 5.7 & 6.3 & 4.6 & 4.0 & 5.7 & 2.9 & 5.7 & 4.6 \\
 & Gemma-3-12B & 19.3 & 29.3 & 29.3 & 31.0 & 29.9 & 32.8 & 21.8 & 27.6 & 24.7 & 24.7 & 28.7 \\
\midrule
\multirow{4}{*}{InterCode-Bash} & Qwen3-32B & 32.1 & 47.5 & 43.3 & 44.2 & 44.2 & 48.3 & 47.5 & 46.7 & 47.5 & 47.5 & 47.5 \\
 & Ministral-3-3B & 8.2 & 20.0 & 17.5 & 19.2 & 19.2 & 25.8 & 20.0 & 18.3 & 16.7 & 20.8 & 13.3 \\
 & Gemma-3-4B & 15.2 & 29.2 & 32.5 & 31.7 & 28.3 & 28.3 & 39.2 & 40.0 & 31.7 & 32.5 & 34.2 \\
 & Gemma-3-12B & 23.5 & 35.8 & 33.3 & 23.3 & 28.3 & 28.3 & 25.0 & 26.7 & 28.3 & 25.0 & 25.0 \\
\midrule
\multirow{4}{*}{$\tau$-bench retail} & Qwen3-32B & 27.4 & 37.7 & 36.5 & 38.8 & 40.3 & 38.8 & 42.6 & 37.4 & 40.6 & 38.8 & 36.5 \\
 & Ministral-3-3B & 25.7 & 37.1 & 37.1 & 36.2 & 35.7 & 35.4 & 38.0 & 38.8 & 37.4 & 36.5 & 37.7 \\
 & Gemma-3-4B & 5.4 & 7.5 & 8.1 & 7.0 & 6.7 & 7.0 & 8.1 & 7.8 & 8.7 & 8.7 & 8.4 \\
 & Gemma-3-12B & 13.1 & 22.6 & 22.3 & 22.3 & 24.1 & 25.2 & 25.2 & 22.6 & 23.5 & 20.9 & 23.5 \\
\bottomrule
\end{tabular}
\end{table}

\section{Cost of a Routed Expert Budget}
\label{app:cost-detail}

\paragraph{Agent tokens against expert tokens.}
What does the budget look like when agent and expert computation are priced separately? SynAct spends \emph{more total} tokens, running the agent $k{=}8$ times per training task, and roughly $15\times$ \emph{fewer expert} tokens, $0.13\%$ of its own budget, calling the expert once per iteration rather than once or twice per training task. In turns the separation widens to one expert turn per iteration against 50 to 100 per run (Appendix~\ref{app:cost}). The two resources move in opposite directions, and total token count obscures the relevant trade.

\paragraph{Substituting a human expert.}
The asymmetry matters most when the expert is a person rather than a model (\S\ref{sec:setup}). Human throughput is bound by the interventions that can be read and authored rather than by token price. A method whose expert cost scales with the training set therefore becomes increasingly expensive to operate, whereas one intervention per iteration stays feasible under either instantiation. We measure the per-call load directly: the digest for the worked example of Appendix~\ref{app:example} spans $49$ blocks and roughly $27$k characters, a reading task of minutes rather than hours, and the returned convention is $2$--$4$ lines. Expert turns are consequently the more meaningful unit for human deployment, which is why we report them alongside tokens.

\section{Cost Accounting Details}
\label{app:cost}
\begin{table}[t]
\centering
\small
\setlength{\tabcolsep}{6pt}
\caption{\textbf{Training cost per (model $\times$ dataset) run} at $n_{\text{train}}{=}50$ and $k{=}8$. Expert tokens are the expensive ones, costing $10$--$30\times$ the agent's per-token rate. Token counts are estimates and turn counts are exact.}
\label{tab:cost}
\begin{tabular}{@{}lrrr@{}}
\toprule
& \multicolumn{2}{c}{Tokens} & Expert \\
\cmidrule(lr){2-3}
Setting & Agent & Expert & turns \\
\midrule
SynAct, 1 iter.   & 7.6M  & \textbf{0.01M} & \textbf{1} \\
SynAct, 3 iter.   & 22.9M & \textbf{0.03M} & 3 \\
SynAct, 10 iter.  & 76.2M & \textbf{0.10M} & 10 \\
\midrule
ACE (1 epoch)        & 0.95M & 0.20M & 100 \\
ReasoningBank        & 0.95M & 0.15M & 50 \\
\bottomrule
\end{tabular}
\end{table}

Table~\ref{tab:cost} separates agent and expert tokens because they price differently. At one iteration SynAct spends $0.01$M expert tokens against ReasoningBank's $0.15$M and ACE's $0.20$M, a $15$ to $20\times$ reduction. The ratio narrows to $5$ to $6.7\times$ at three iterations, since each additional SynAct iteration adds one expert call while the baselines pay their expert cost once. The largest savings therefore occur in the same regime where most of the accuracy gain is obtained, since iteration~1 captures most of the available improvement (Appendix~\ref{app:iter}), and the headline $15\times$ should consequently be read as a single-iteration figure rather than an iteration-independent property.

The token ratios also hide a second asymmetry. Expert tokens account for $0.13\%$ of SynAct's own budget at every iteration count, against $17\%$ for ACE and $14\%$ for ReasoningBank: a rise in the expert's price per token barely moves SynAct's total and moves the baselines' substantially. Expert turns diverge even more sharply, one call per iteration against 50 to 100 per baseline run, and for a human expert the turn count is the binding resource. This is why we report both units.

The remaining cost is agent-side. SynAct spends $8\times$ more agent tokens at one iteration because it runs $k{=}8$ rollouts per training task to obtain the acquisition scores. This cost is reducible: Appendix~\ref{app:budget} shows $k$ can be cut to $2$ while still beating base on all six settings, whereas the baselines' expert cost is not reducible without changing the methods. Turn counts are exact, taken from cached traces, and token counts are estimates formed from exact call counts and per-call anchors measured live.

\section{Inference-Time Routing Details}
\label{app:online}

\paragraph{Where should the expert intervene?}
At inference time the question shifts: instead of deciding which training decisions deserve explanation, we decide which test tasks deserve intervention. For each task we draw $k{=}8$ rollouts at temperature $0.7$ and group them by state key as in \S\ref{sec:acq}. A consistency gate decides whether intervention is warranted: we extract an outcome key per rollout, take the modal key, and route only when the modal fraction falls below $\theta{=}0.6$. A task that agrees with itself is answered by majority vote at no cost, and a task already solved by any pooled rollout is kept as is. The expert thus sees only tasks where the agent both disagrees with itself and has not already succeeded. On a routed task, $\mathcal{I}(v^\star)$ selects the decision node, the expert returns a correction anchored at that node, and the agent retries with the correction as the first of $R$ attempts. Attempts $2,\dots,R$ are ordinary self-reflection.

\paragraph{Can other routers spend the same budget as effectively?}
We cap every expert-using alternative in Table~\ref{tab:online} to SynAct's expert-call count $B$ at this operating point, making the comparison like for like. The cascade escalates whole tasks to the expert under the same expert-token spend. The action-entropy router keeps the consistency gate but replaces $\mathcal{I}(v^\star)$ with $\arg\max_v \delta H(v)$, spending its $B$ expert plans on the most uncertain tasks. The judge hands the routing decision to the expert itself, capped at $B$ judge calls and averaged over task-arrival orders, since the judge cannot rank tasks it has not yet seen. SynAct-rules additionally activates the iteration-10 convention from \S\ref{sec:res-budget} during rollouts and retries.

\begin{table}[h]
\centering
\caption{\textbf{Inference-time routing at a matched expert budget} ($\theta{=}0.6$, $R{=}3$), held-out accuracy (\%), referenced from \S\ref{sec:res-online}. SR(10) is self-reflection with ten retries and no expert. Cascade, action-entropy, and the expert judge are each throttled to SynAct's expert budget at this operating point. Bold marks the best routing method per row.}
\label{tab:online}
\footnotesize
\setlength{\tabcolsep}{4.5pt}
\begin{tabular}{l r r r r r r r}
\toprule
Model & base & SR(10) & Cascade & Act.-Ent. & Judge & SynAct & +rules \\
\midrule
\multicolumn{8}{@{}l}{\textit{InterCode-SQL}}\\
Qwen3-32B & 66.0 & 82.7 & 70.7 & 86.0 & 84.7 & 85.3 & \textbf{91.3} \\
Ministral-3-3B & 47.3 & 82.0 & 61.3 & \textbf{88.7} & 85.9 & 87.3 & 88.0 \\
Gemma-3-4B & 48.7 & 62.0 & 49.3 & 67.3 & 66.3 & \textbf{68.0} & 66.0 \\
Gemma-3-12B & 64.7 & 83.3 & 65.3 & 86.0 & 84.3 & 85.3 & \textbf{86.7} \\
\addlinespace[2pt]
\multicolumn{8}{@{}l}{\textit{TextCraft}}\\
Qwen3-32B & 50.0 & 72.4 & 55.2 & 75.9 & 80.0 & 81.0 & \textbf{86.2} \\
Ministral-3-3B & 19.0 & 31.0 & 10.3 & 31.0 & 29.4 & 29.3 & \textbf{41.4} \\
Gemma-3-4B & 3.4 & 5.2 & 1.7 & 17.2 & 10.3 & 12.1 & \textbf{19.0} \\
Gemma-3-12B & 15.5 & 32.8 & 25.9 & 51.7 & 45.9 & 50.0 & \textbf{62.1} \\
\addlinespace[2pt]
\multicolumn{8}{@{}l}{\textit{InterCode-Bash}}\\
Qwen3-32B & 37.5 & 60.0 & 40.0 & 65.0 & 64.4 & 67.5 & \textbf{75.0} \\
Ministral-3-3B & 10.0 & 35.0 & 12.5 & 45.0 & 46.1 & 50.0 & \textbf{62.5} \\
Gemma-3-4B & 10.0 & 45.0 & 22.5 & \textbf{55.0} & 51.9 & 52.5 & \textbf{55.0} \\
Gemma-3-12B & 27.5 & 47.5 & 32.5 & 52.5 & 55.1 & 52.5 & \textbf{57.5} \\
\addlinespace[2pt]
\multicolumn{8}{@{}l}{\textit{$\tau$-bench retail}}\\
Qwen3-32B & 25.2 & 65.2 & 49.6 & 67.8 & 67.9 & \textbf{69.6} & 63.5 \\
Ministral-3-3B & 27.8 & 62.6 & 43.5 & 58.3 & 61.9 & 60.9 & \textbf{64.3} \\
Gemma-3-4B & 5.2 & 9.6 & 8.7 & 13.0 & 11.3 & 11.3 & \textbf{17.4} \\
Gemma-3-12B & 12.2 & 29.6 & 20.9 & 33.9 & 37.6 & 38.3 & \textbf{47.0} \\
\bottomrule
\end{tabular}
\end{table}

\begin{figure}[t]
\centering
\includegraphics[width=\linewidth]{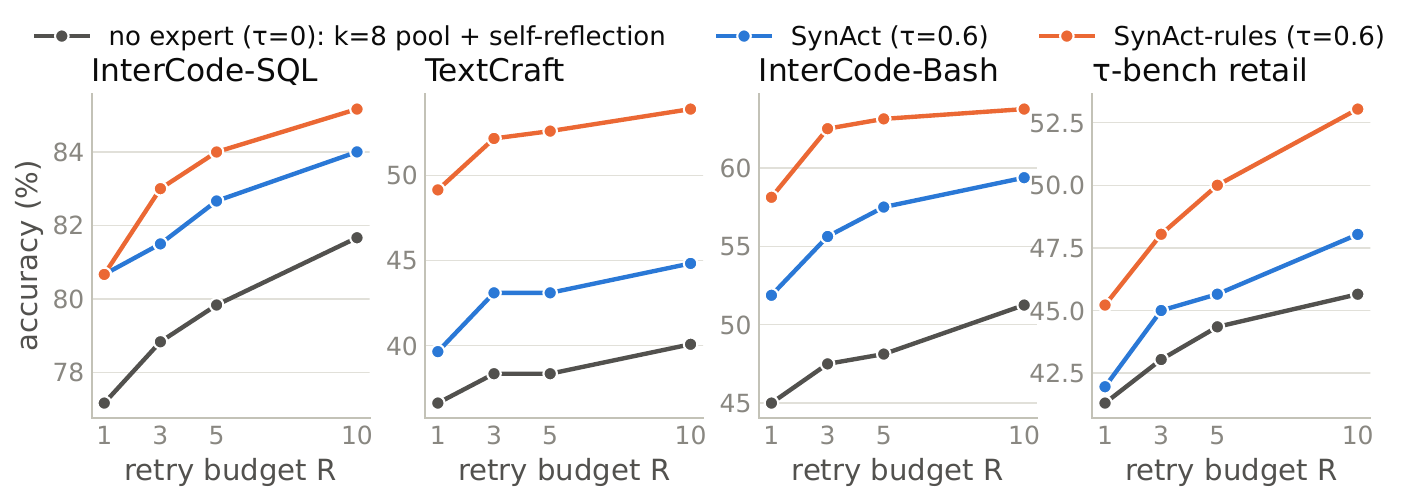}
\caption{\textbf{Accuracy against retry budget $R$}, mean over the four agent models. The no-expert pool with self-reflection against SynAct and SynAct-rules at $\theta{=}0.6$.}
\label{fig:online-retry}
\end{figure}

\paragraph{How sensitive is routing to the consistency threshold?}
Sweeping $\theta$ from $0.5$ to $0.8$ at $R{=}3$ expands the routed pool at every step, since $k{=}8$ restricts the modal fraction to multiples of $1/8$. Across the two SQL settings, accuracy changes by at most $1.3$pp while expert calls roughly triple, leaving little for the threshold to decide once the pool is near saturation. TextCraft with Qwen is the one swept setting where the knob matters: accuracy rises $+6.9$pp from $\theta{=}0.5$ to $0.7$ before the routable pool saturates. TextCraft with Ministral is flat because every threshold selects the same capability-bound pool. We therefore use $\theta{=}0.6$ for every benchmark and model without per-setting tuning: where routing helps, a broad range of thresholds captures the gain.

\paragraph{What if the agent gets more chances after intervention?}
Table~\ref{tab:online-r} raises the retry budget from $R{=}3$ to $R{=}10$ at the same expert budget and operating point, matching the retry allowance of SR(10). Accuracy never falls and rises in 11 of 16 settings, with the largest gain $+11.3$pp for Ministral on $\tau$-bench. The correction and self-reflection therefore compose rather than compete: the expert supplies a targeted change, and additional attempts continue the search from there. Figure~\ref{fig:online-retry} makes the comparison across the full retry range, and the routed variants dominate the no-expert pool at matched $R$ throughout: the benefit of expert feedback does not vanish once the agent is allowed more retries.

\begin{table}[h]
\centering
\caption{\textbf{SynAct at $R{=}10$ against $R{=}3$}, same expert budget and operating point ($\theta{=}0.6$), accuracy (\%).}
\label{tab:online-r}
\small
\setlength{\tabcolsep}{5pt}
\begin{tabular}{l rr @{\hspace{14pt}} l rr}
\toprule
SQL & $R{=}3$ & $R{=}10$ & TextCraft & $R{=}3$ & $R{=}10$ \\
\midrule
Qwen3-32B & 85.3 & 86.0 & Qwen3-32B & 81.0 & 82.8 \\
Ministral-3-3B & 87.3 & 87.3 & Ministral-3-3B & 29.3 & 31.0 \\
Gemma-3-4B & 68.0 & 74.0 & Gemma-3-4B & 12.1 & 12.1 \\
Gemma-3-12B & 85.3 & 88.7 & Gemma-3-12B & 50.0 & 53.4 \\
\midrule
Bash & $R{=}3$ & $R{=}10$ & $\tau$-bench & $R{=}3$ & $R{=}10$ \\
\midrule
Qwen3-32B & 67.5 & 67.5 & Qwen3-32B & 69.6 & 70.4 \\
Ministral-3-3B & 50.0 & 57.5 & Ministral-3-3B & 60.9 & 72.2 \\
Gemma-3-4B & 52.5 & 57.5 & Gemma-3-4B & 11.3 & 11.3 \\
Gemma-3-12B & 52.5 & 55.0 & Gemma-3-12B & 38.3 & 38.3 \\
\bottomrule
\end{tabular}
\end{table}

\section{Routing Signal at Inference: Outcome-MI Against Action Entropy}
\label{app:signal-routing}

The training loop is robust to the acquisition ranking (Appendix~\ref{app:budget}). We therefore ask where the choice of signal does matter. Inference-time routing is the natural place to look: the signal is spent per task rather than amortized into a reusable rule, and a misplaced branch point cannot be recovered through aggregation across tasks. We compare SynAct's outcome-MI branch point with the action-entropy alternative ($\arg\max_v \delta H(v)$) on $\tau$-bench retail from two views. First, we run both routers on the full benchmark at the shared operating point ($\theta{=}0.6$, $R{=}3$). Full-benchmark accuracy mixes the routing decision with the consistency gate and the unrouted pool. The second view therefore isolates the signal itself: both routers admit exactly the same $B$ tasks through the consistency gate, differing only in \emph{where} in the trajectory the expert plan is requested. We report the resulting rescue rate, solved over routed, on this shared task set. Gemma-3-4B is excluded because its near-floor accuracy leaves too few rescues for a meaningful comparison.

\begin{table}[h]
\centering
\caption{\textbf{Outcome-MI against action entropy as the routing signal} ($\tau$-bench retail, $\theta{=}0.6$, $R{=}3$, $n{=}115$). Left: full-benchmark accuracy (\%). Right: expert rescues among the $B$ tasks both signals route, solved/routed (\%). Both signals route the same $B$ tasks per model, and the rescue columns isolate \emph{where} the expert plan is anchored. Bold marks the better signal per view.}
\label{tab:signal-routing}
\small
\setlength{\tabcolsep}{5pt}
\begin{tabular}{l rr @{\hspace{16pt}} r rr}
\toprule
 & \multicolumn{2}{c}{Full benchmark (acc.\ \%)} & & \multicolumn{2}{c}{Rescues on routed set} \\
\cmidrule(lr){2-3}\cmidrule(l){5-6}
Model & MI-out & Act.-Ent. & $B$ & MI-out & Act.-Ent. \\
\midrule
Qwen3-32B      & \textbf{69.6} & 67.8 & 33 & \textbf{7/33} (21.2) & 5/33 (15.2) \\
Ministral-3-3B & \textbf{60.9} & 58.3 & 44 & \textbf{9/44} (20.5) & 6/44 (13.6) \\
Gemma-3-12B    & \textbf{38.3} & 33.9 & 61 & \textbf{8/61} (13.1) & 3/61 (4.9) \\
\bottomrule
\end{tabular}
\end{table}


On the full benchmark, outcome-MI leads action entropy by $1.8$--$4.4$pp across the three models, although this gap reflects both the routing signal and the rest of the routing pipeline. The controlled comparison is more revealing: on the shared routed set, where the only difference is the branch presented to the expert, outcome-MI rescues $13$--$21\%$ of routed tasks, compared with $5$--$15\%$ for action entropy, a $1.4\times$--$2.7\times$ advantage that widens on the weakest model. This is the regime captured by Proposition~\ref{prop:sep}: entropy identifies the step where the agent is most uncertain, which need not be the step that determines the outcome. Anchoring the expert's single correction at a cosmetic branch therefore spends the consultation without changing where the run ends.

The contrast with training-time robustness is instructive. Aggregating evidence into a digest lets the expert recover from imperfect selection, whereas a per-task consultation has no such slack. The value of a consequence-aware signal therefore concentrates where feedback cannot be amortized: at inference, choosing \emph{where} to intervene is itself the bottleneck.

\end{document}